%% file: main.tex
\documentclass[journal,twoside,web]{ieeecolor}

\usepackage{generic}

\makeatletter
\let\originaltitlepagestyle\ps@titlepagestyle
\def\ps@titlepagestyle{%
    \originaltitlepagestyle
    \def\@oddhead{}%
    \def\@evenhead{}%
}
\makeatother

\usepackage{cite}
\usepackage{amsmath,amssymb,amsfonts}
\usepackage{graphicx}
\usepackage{xurl}
\usepackage{hyperref}
\hypersetup{hidelinks=true}
\usepackage{textcomp}

\usepackage[utf8]{inputenc}
\usepackage[T1]{fontenc}
\usepackage{lmodern}
\usepackage{graphicx}
\usepackage{glossaries}
\usepackage{amsmath,amssymb,amsfonts}
\usepackage{mathtools}
\usepackage{bm}
\usepackage{bbm}
\usepackage{hyperref}
\usepackage{microtype}
\usepackage{cleveref}
\usepackage{xparse}
\usepackage{thmtools}
\usepackage{subcaption}
\usepackage{thm-restate}
\usepackage{comment}
\usepackage{algorithm}
\usepackage{caption}
\usepackage{algpseudocode}

\def\BibTeX{{\rm B\kern-.05em{\sc i\kern-.025em b}\kern-.08em
    T\kern-.1667em\lower.7ex\hbox{E}\kern-.125emX}}
\input{tex/tex_files/macros}
\input{tex/tex_files/metadata}

\begin{document}
\title{\customtitle}
\author{Raham A. Butt, Marco Romanelli, and Roche C. de Guzman
    \thanks{
        This work was supported in part by NSF Award No. 2535091 through UNC-Chapel Hill subaward No. 5136002 to Hofstra University.  R.C. de Guzman was supported in part by NIH/NIBIB Award No. R25EB035522.  The authors thank the FRINK/Proto-OKN teams at RENCI and SDSC for providing infrastructure and technical support for the Medical Device Knowledge Graph and acknowledge the use of Hofstra’s StarHPC cluster, supported by NSF Award No. 2320735.
    }
    \thanks{R. A. Butt is with the Department of Engineering, Hofstra University, Hempstead, NY 11549 USA (e-mail: raham.a.butt@hofstra.edu).}
    \thanks{M. Romanelli is with the Department of Computer Science, Hofstra University, Hempstead, NY 11549 USA (e-mail: marco.romanelli@hofstra.edu).}
    \thanks{R. C. de Guzman is with the Department of Engineering, Hofstra University, Hempstead, NY 11549 USA (e-mail: RocheRey.C.DeGuzman@hofstra.edu).}
}

\maketitle

\begin{abstract}
    \input{tex/tex_files/sections/abstract}

\end{abstract}

\begin{IEEEkeywords}
    Anomaly detection, decision making, machine learning, medical devices, quality control, uncertainty.
\end{IEEEkeywords}

\section{Introduction}
\label{sec:introduction}
\input{tex/tex_files/sections/intro.tex}

\section{Related Work}
\label{sec:related-work}
\input{tex/tex_files/sections/related_work.tex}

\section{Framework}
\label{sec:framework}
\input{tex/tex_files/sections/framework.tex}

\section{Experiments}
\label{sec:experiments}
\input{tex/tex_files/sections/experiments.tex}

\section{Conclusion}
\label{sec:conclusion}
\input{tex/tex_files/sections/conclusion.tex}

\section*{References}
\bibliographystyle{IEEEtran}
\bibliography{tex/bibliography}

\appendix
\section{Appendix}
\label{sec:appendix}
\input{tex/tex_files/sections/appendix.tex}

\end{document}

%% file: tex/tex_files/macros.tex
\newcommand{\R}{\mathbb{R}}

\newcommand{\calY}{\mathcal{Y}}
\newcommand{\calX}{\mathcal{X}}
\newcommand{\calD}{\mathcal{D}}
\newcommand{\x}{\mathbf{x}}
\newcommand{\card}[1]{\left| #1 \right|}
\newcommand{\h}[1]{H_{#1}}
\newcommand{\pXY}{p_{XY}}
\newcommand{\PYXhat}{P_{\widehat{Y}|{X}}}

\newcommand{\phat}{\widehat{p}}
\newcommand{\PYX}{P_{Y|X}}
\newcommand{\fun}[1]{\text{#1}}
\newcommand{\argmax}{\operatornamewithlimits{argmax}}
\NewDocumentCommand{\W}{g}{%
    \IfNoValueTF{#1}
    {W\left(p,\phat\right)}
    {W_{#1}\left(p,\phat\right)}%
}
\newcommand{\costmult}{k}
\newcommand{\placeholderdist}{q}

\crefname{algocf}{Algorithm}{Algorithms}   
\Crefname{algocf}{Algorithm}{Algorithms}
\newacronym{pdf}{pdf}{probability density function}
\newacronym{kkt}{KKT}{Karush-Kuhn-Tucker}
\newacronym{cmp}{CMP}{Constrained Mathematical Programming}
\newacronym{ml}{ML}{machine learning}
\newacronym{ai}{AI}{artificial intelligence}
\newacronym{mlp}{MLP}{multi-layer perceptron}
\newacronym{protookn}{Proto-OKN}{Prototype Open Knowledge Network}
\newacronym{okn}{OKN}{Open Knowledge Network}
\newacronym{kg}{KG}{Knowledge Graph}

%% file: tex/tex_files/metadata.tex
\newcommand{\customtitle}{
    Epistemic Uncertainty-Aware Defect Detection for Quality Control in Medical Device Manufacturing
}

%% file: tex/tex_files/sections/abstract.tex
\textit{Objective:} We investigate whether accounting for epistemic uncertainty can improve the reliability of automated defect detection in medical device manufacturing.

\textit{Methods:} We consider a \acrlong*{ml} framework that operates on heterogeneous manufacturing and device-report data 
represented with \acrlongpl*{kg}. To mitigate errors arising from uncertainty in the decision model, we analyze a principled rejection strategy to abstain from predictions whose estimated epistemic uncertainty exceeds a specified threshold. We evaluate the approach using standard synthetic benchmarks and real-world medical device report data.

\textit{Results:} The theoretical results establish the validity of the method characterizing the regimes 
under which it is expected to be effective. Empirically, the rejection strategy enables explicit control of coverage, that is, the proportion of samples for which the model issues predictions, while improving performance on the retained samples. On 266,170 real-world FDA MAUDE device reports, a 10\% abstention rate reduces classification error by 48\%, and more aggressive rejection (approximately 70\% coverage) yields near-perfect accuracy on the retained samples. On standard synthetic manufacturing benchmarks, abstaining on
\(
9\%\) of the decisions, our approach reduces the risk up to \(
63\%\) compared with the standard no-abstention approach.
%

\textit{Conclusions:} Abstaining from predictions with high epistemic uncertainty can provide a practical tool for controlling the reliability of \acrlong*{ml}-based defect detection, 
especially in high-stakes medical device manufacturing applications.

\textit{Significance:} Uncertainty-aware defect detection may support safer and more reliable quality assurance in medical device manufacturing by identifying cases that require additional inspection rather than issuing potentially harmful predictions.

%% file: tex/tex_files/sections/intro.tex
\IEEEPARstart{D}{efective} medical devices can cause serious patient harm, and their timely detection, both on the manufacturing line and in post-market adverse event reports, is central to device safety~\cite{fda_maude}. Such defects can arise due to various reasons, including hardware malfunctions, software bugs, or environmental factors. Detecting these anomalies promptly is essential to prevent potential harm to patients and to maintain the integrity of the medical device. The \emph{complexity} of modern medical devices, which often incorporate advanced semiconductor components, \emph{may limit the effectiveness of traditional defect detection methods} \cite{huang2026unsuperviseddefectdetectionsurgical}. As a result, there is a growing interest in leveraging machine learning techniques to enhance the detection process (cf. \cite{bonatti2025expert,
    kim2024deep,lu2019realtime,cao2024xscan})

In particular, \cite{shenoy2026uncertainty} introduces a general approach to defect–mechanism–parameter violation hypotheses for surface-mount assembly root-cause analysis from uncertain observations. Using \glspl*{kg} as the source of information, the authors propose a method to detect and explain manufacturing defects. In doing so, they leverage stage-wise three-head cascade \gls*{mlp} architecture that models aleatoric uncertainty by estimating defect and mechanism likelihoods and parameter-level risk scores, enabling probabilistic and traceable root-cause ranking. Specifically, Head 1 detects the defects or lack thereof, Head 2 identifies the underlying mechanisms, and Head 3 estimates the associated risks. The authors note that their method can be sensitive to \emph{epistemic uncertainty}, which may arise from the limited amount of information available during the training phase (cf. \cite{DBLP:conf/cvpr/Mukhoti0ATG23,lakshminarayanan2017ensembles,kendall2017uncertainties}). The distinction between aleatoric and epistemic uncertainty is well established in the \gls*{ml} literature~\cite{hullermeier2021aleatoric}, and the need to communicate and act on model uncertainty is increasingly recognized as a requirement for machine learning in health-related, high-stakes applications~\cite{kompa2021second,begoli2019need}.

Crucially, this can lead to detection failures and misclassification of defects. This is a significant concern in high-stakes applications such as medical device manufacturing, where misclassification can incur substantial costs and pose a significant risk of serious patient harm. A wrong detection performed by Head 1, for instance, can lead to a wrong alert being raised, potentially incurring substantial costs (false positives) or posing a significant risk of serious patient harm (false negatives).

\input{tex/tex_files/plots/toy_example_fig.tex}

We argue that addressing this issue is critical and, therefore, investigate the impact of epistemic uncertainty on detection failures in medical device manufacturing and examine a principled approach to mitigating its effects. Specifically, we adopt a cost-sensitive classification framework that explicitly accounts for uncertainty in the decision-making process by allowing the model to abstain from uncertain predictions. This formulation aims to improve the robustness and reliability of defect detection while accounting for the costs associated with erroneous decisions.

In our framework, the task performed by \gls*{mlp} Head 1 is treated as a critical decision point, as an incorrect prediction at this stage may compromise the reliability of subsequent decisions and risk assessment. To mitigate this risk, the model is allowed to abstain when its prediction is uncertain, thereby reducing the likelihood of misclassification and improving the overall reliability of the defect detection system. Although outside the scope of this work, abstained predictions could subsequently be subjected to further scrutiny by human experts or additional automated checks. Such a mechanism provides a more cautious approach to defect detection in high-stakes settings and could further enhance the reliability and safety of medical device manufacturing.

\subsection{Contributions}
\label{subsec:contributions}
We summarize the contributions contained in the rest of this paper as follows:
\begin{enumerate} \item We introduce a reproducible real-world benchmark from a medical device case study, publicly accessible as a \gls*{kg} through the \gls*{protookn} (cf. \cite{protookn,oknregistry}), demonstrating the practical applicability of the proposed method in a high-stakes setting. All code required to reproduce the experiments, including the procedure used to generate the benchmarks, is available at \url{https://github.com/raham24/Epistemic-Uncertainty-Aware-Defect-Detection-for-Quality-Control-in-Medical-Device-Manufacturing/tree/code-only}.
    \item
          We investigate detection failures arising from epistemic uncertainty in recent root-cause analysis approaches for semiconductor manufacturing, and extend that analysis to high-stakes domains such as defect detection in medical device manufacturing, where misclassification can lead to substantial costs and serious risks to patient safety.
    \item We analyze a simple yet effective approach to mitigate the impact of epistemic uncertainty on detection failures, leveraging a cost-sensitive classification framework that incorporates uncertainty handling into the decision-making training process, allowing the model to abstain from uncertain predictions. Our theoretical results provide a principled foundation with a mathematical-programming approach, precisely characterizing the regimes under which it is expected to be effective and complementing the intuition developed in prior work.
    \item We evaluate the proposed approach through extensive experiments on a synthetic benchmark spanning varying levels of class-posterior overlap and five random seeds. The results demonstrate consistent effectiveness as increasing overlap makes the classification task progressively more challenging, as well as robustness to random initialization.
\end{enumerate}

%% file: tex/tex_files/plots/toy_example_fig.tex
\begin{figure*}[!h]
    \centering
    \begin{subfigure}[c]{0.55\textwidth}
        \centering
        \includegraphics[width=\linewidth]{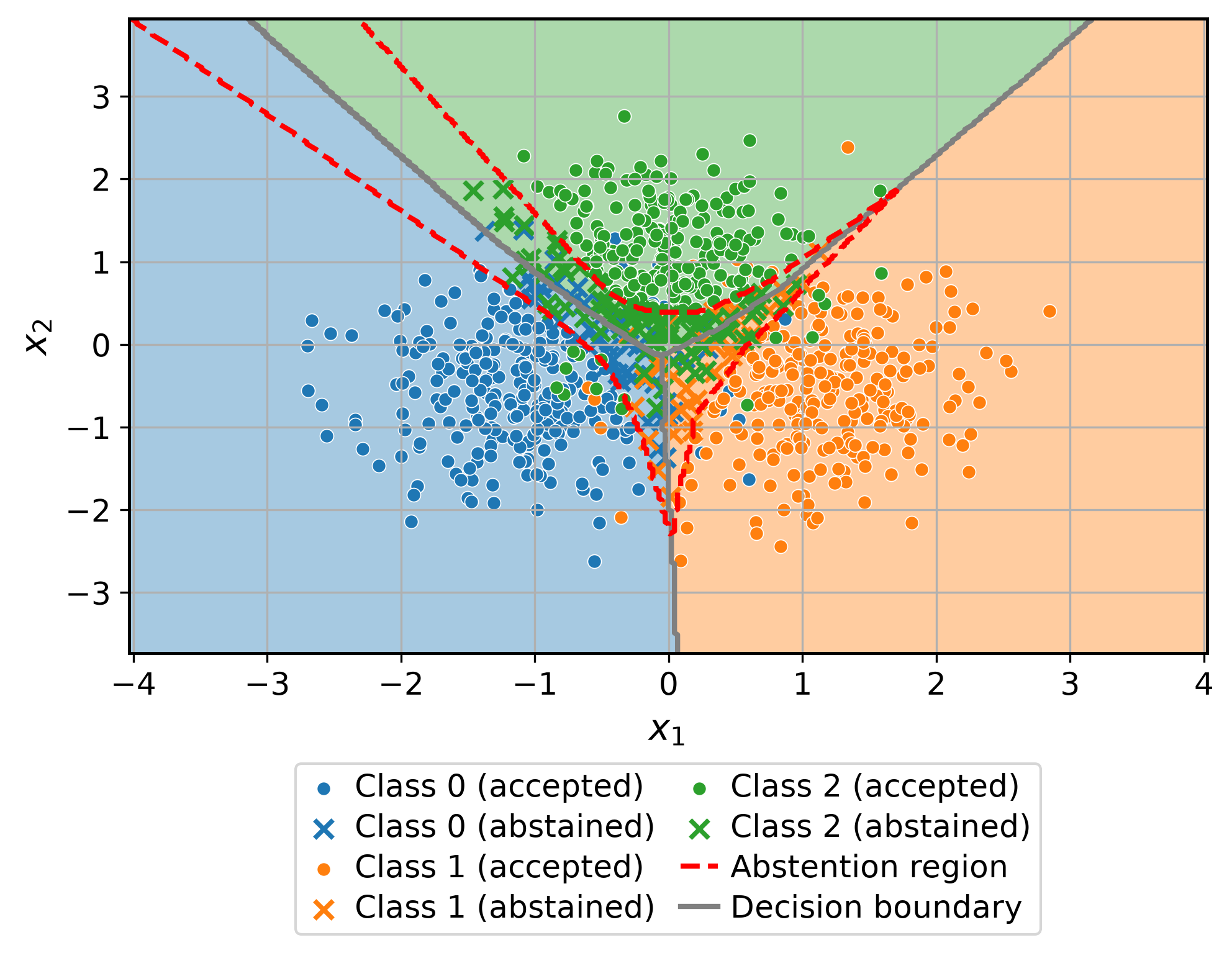}
        \caption{Decision regions and rejected points.}
        \label{fig:decision_regions}
    \end{subfigure}
    \hspace{0.01\textwidth}
    \begin{minipage}[c]{0.32\textwidth}
        \centering
        \begin{subfigure}{\linewidth}
            \centering
            \includegraphics[width=\linewidth]{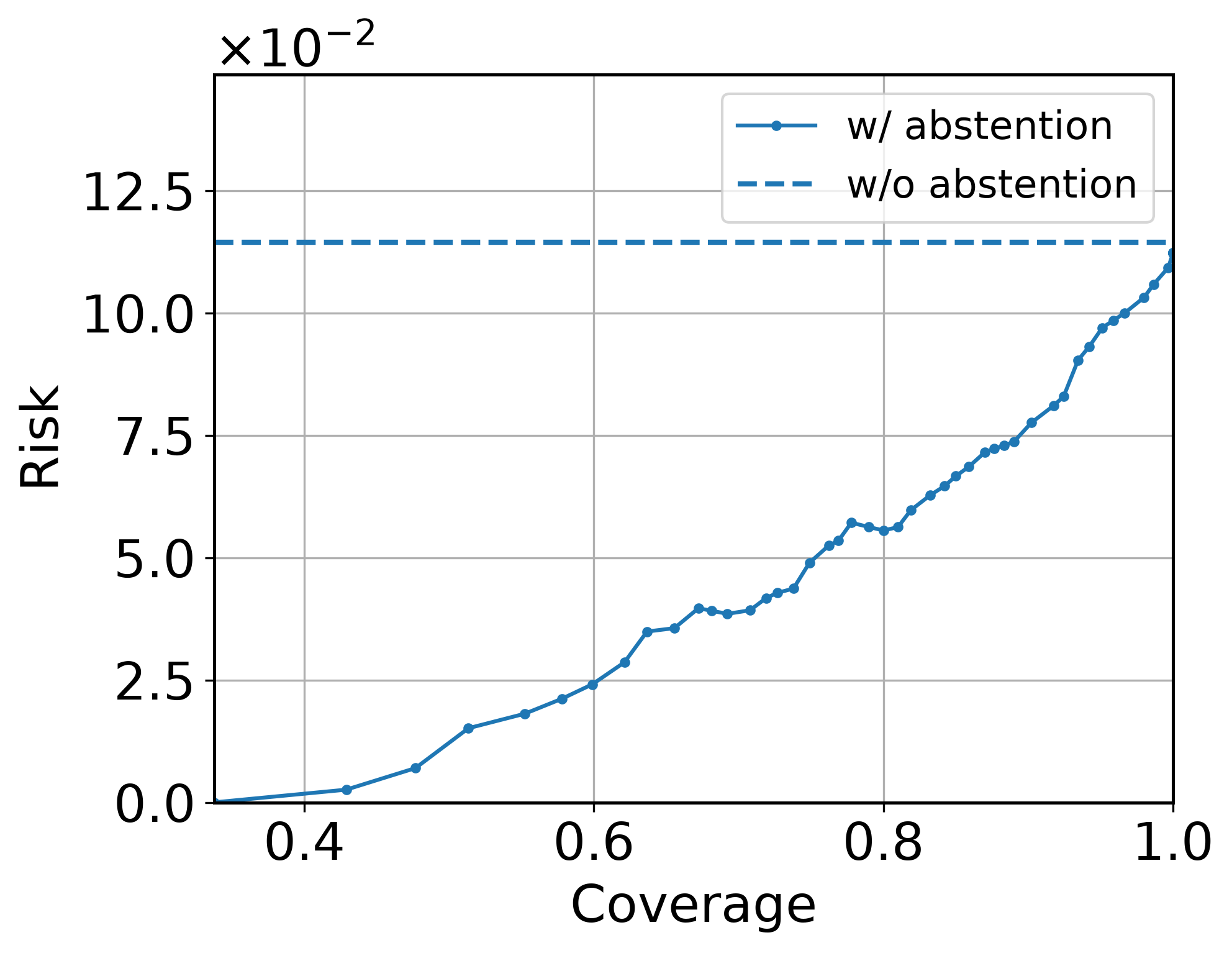}
            \caption{Selective risk curve.}
            \label{fig:risk_curve}
        \end{subfigure}
        \vspace{0.8em}
        \begin{subfigure}{\linewidth}
            \centering
            \includegraphics[width=\linewidth]{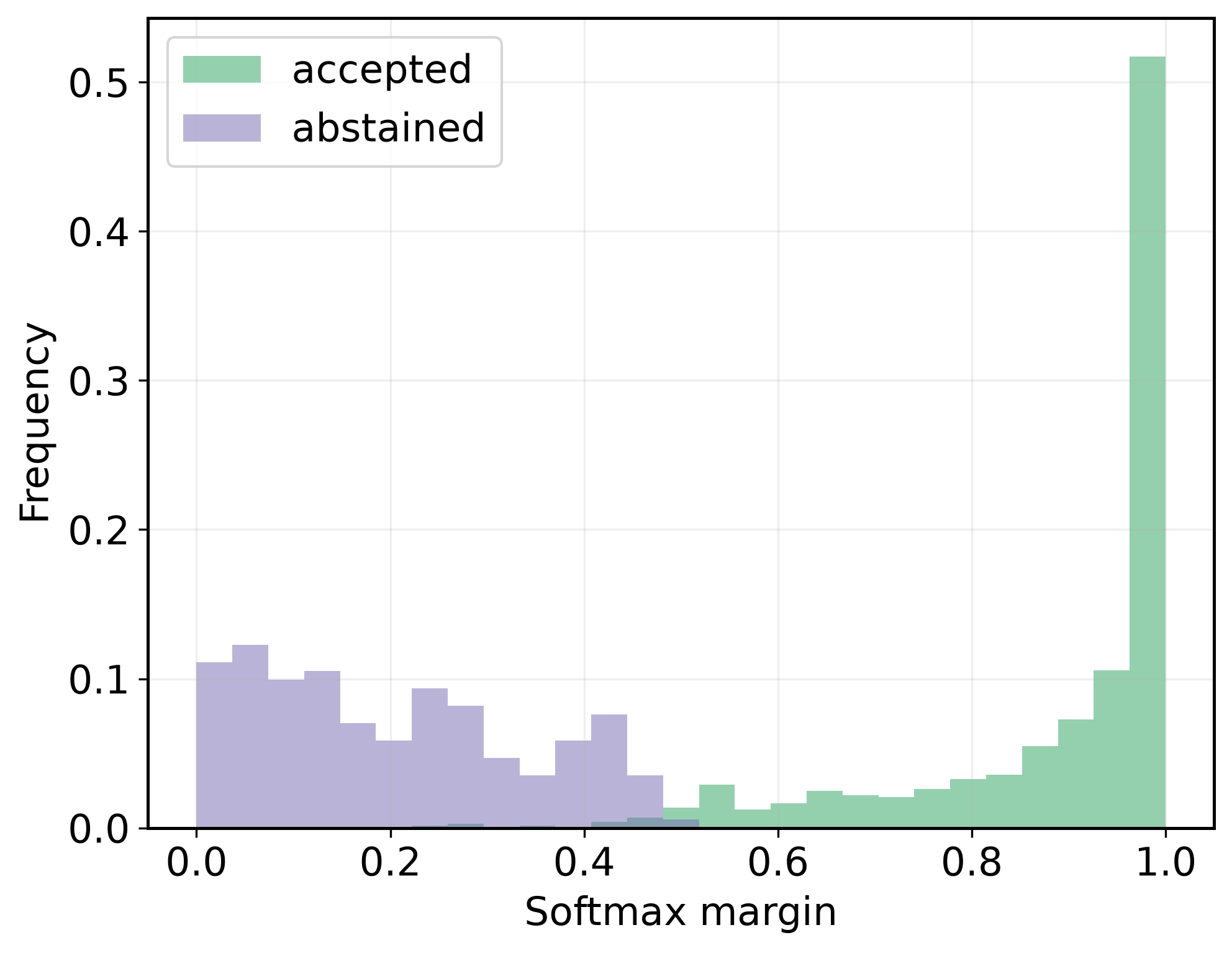}
            \caption{Classification margin histogram.}
            \label{fig:margin_histogram}
        \end{subfigure}
    \end{minipage}
    \caption{Illustrative example -- The impact of principled abstention on the decision regions (\Cref{fig:decision_regions}), 
    risk-coverage curve (\Cref{fig:risk_curve}), and classification margin histogram (\Cref{fig:margin_histogram}) for a toy dataset with 2-dimensional features and 3 balanced classes. The model rejects observations concentrated near decision boundaries, where predictions are more uncertain. Lowering the coverage
        by 
        \(19\%\) reduces the classification risk on retained predictions by 
        \(50\%\). The softmax-margin distributions further show that rejected observations generally have smaller margins than retained observations, with only limited overlap, indicating that the reservation mechanism effectively identifies uncertain predictions and trades coverage for lower selective risk. Further details are provided in \Cref{subsec:illustrative_example}.
        %
    }
    \label{fig:illustrative_example}
\end{figure*}

%% file: tex/tex_files/sections/related_work.tex
The increasing integration of digital technologies into medical devices has introduced new challenges for quality assurance. To address these challenges, \gls*{ai}-based defect detection has gained attention as an efficient alternative to conventional rule-based and manual inspection methods (cf. \cite{kim2024deep,bonatti2025expert}).

In the context of software reliability and cybersecurity, \cite{kim2024deep} proposed an AI-based software defect detection method as part of a broader framework for strengthening medical device security. Their work highlights the potential of \gls*{ai} to provide a more efficient alternative to conventional rule-based approaches for identifying software flaws that may compromise the reliability and safety of medical devices.

From a manufacturing perspective, \cite{bonatti2025expert} developed an automated expert system for detecting porosity defects in prosthetic components produced using electron beam melting. By analyzing layer-by-layer optical images acquired during manufacturing, their approach enables rapid and nondestructive identification of defects and demonstrates strong agreement with computed tomography-based inspection.

More recently, \cite{huang2026unsuperviseddefectdetectionsurgical} investigated unsupervised defect detection for surgical instruments and identified several challenges associated with applying conventional industrial defect detection methods to the medical device domain, including textured backgrounds, small defects, and domain shifts. Their proposed approach combines background masking, patch-based analysis, and domain adaptation to improve the detection of fine-grained defects. Collectively, these studies demonstrate the growing role of AI-based methods in improving the efficiency of defect detection across both the software and physical components of medical devices, while emphasizing the importance of reliable detection mechanisms in safety-critical healthcare applications.

As many modern medical devices rely on semiconductor components, advances in industrial semiconductor defect detection are also relevant to the medical device domain. Neural methods have been widely applied to defect detection, fault classification, and anomaly prediction in semiconductor manufacturing, particularly for analyzing process and inspection data. Prior work \cite{lee2017convolutional,khader2018stencil,lu2019realtime,cao2024xscan} has focused on improving detection accuracy and enabling real-time process control, including for solder paste printing, a major source of downstream defects. More recently, \cite{shenoy2026uncertainty} introduced a framework for generating structured causal explanations for root cause analysis, incorporating uncertainty to quantify the risk associated with inferred defect causes.

Crucially, these works do not explicitly account for uncertainty arising from the model's decisions themselves, i.e., epistemic uncertainty, which is a central focus of our work. The use of abstention (or rejection option) and selective classification to manage uncertainty in decision-making has been studied for decades (cf.~\cite{Flores1958TEC,Chow1970TIT,vovk2005algorithmic,angelopoulos2023conformal,bates2021distribution,elyaniv2010foundations}), providing a mechanism for avoiding low-confidence decisions and reducing incorrect classifications~\cite{PudilNBK1992ICPR}.
Several approaches have been proposed to incorporate abstention into predictive models. Some methods require dedicated model architectures~\cite{CorbiereTBCP2019NeurIPS,GeifmanE2019ICML}, while others rely on perturbed training data to improve empirical generalization~\cite{HuangZZ2020NeurIPS,FischJB2022CORR} or on specialized training procedures and ad hoc loss functions~\cite{FengAHA2022CORR}.

Alternatively, post-training approaches such as \cite{GraneseRGPP2021NeurIPS} leverage information-theoretic measures to quantify uncertainty without modifying the underlying model, providing a computationally efficient and principled solution, although their effectiveness may be limited for overconfident models (cf. \cite{GuoPSW2017ICML}).
Our work builds on the framework of \cite{liu2019deep}, which draws on the decision-theoretic principles of \cite{markowitz1952portfolio}. We extend this framework by providing additional theoretical insights, including an interpretable characterization of the abstention threshold (\Cref{theorem:kkt}) and a complete mathematical programming characterization through the associated optimality conditions (\Cref{theorem:threshold}).

Finally, the recent development of resources such as \gls*{protookn}~\cite{protookn,oknregistry}, which integrate heterogeneous data from sources including the FDA 510(k) and MAUDE databases~\cite{fda_510k,fda_maude}, provides new opportunities to investigate uncertainty-aware defect detection using real-world medical device data. \gls*{kg} are an established substrate for integrating heterogeneous biomedical data for downstream analysis~\cite{nicholson2020constructing}. On the device-safety side, the MAUDE database underpins a substantial surveillance literature, including reviews of signal-detection methodology~\cite{chung2020signal}, change-point analysis for post-market signals~\cite{xu2015changepoint}, natural-language processing of adverse-event narratives~\cite{mao2021nlp}, large-language-model classification of device-related adverse events~\cite{sohoni2026llm}, and machine-learning prediction of FDA recalls from public data~\cite{slivinskis2025recall}. To our knowledge, none of this line of work addresses epistemic uncertainty or selective prediction, which is the gap our real-data benchmark targets.

%% file: tex/tex_files/sections/framework.tex
We focus on the classification task associated with head \(\h{1}\), where errors can result in either false or missed defect alarms, increasing production costs and risk. The same analysis extends naturally to other classification heads. We therefore formulate the task for \(\h{1}\) as a multi-class classification problem.

Let \(\calX\subseteq\R^d\) be the (possibly continuous) feature space and let  \(\calY=\{1,\dots,C\}\) be the label space for the classification task, thus \(\card{\calY}=C\). Moreover, let \(\pXY\) be the underlying (unknown) \gls*{pdf} over \(\calX\times\calY\).
Let \(\calD_{n} = \!\left\{( \x_1, y_1),\dots,( \x_n, y_n)\right\}\sim \pXY\) be a random realization of $n$ i.i.d. samples according to \(\pXY\)  denoting the \textit{training set}, where \(\x_i \in \calX\) is the input (feature), \(y_i \in \calY\) is the output class among $C$ possible classes and $n$ denotes the size of the training set.

A predictor is a function \({f}_{\calD_{n}}: \calX\rightarrow\calY\), defined as \(
f_{\calD_{n}}(\x) \triangleq \argmax_{y\in\calY}~\PYXhat(y|\x;\calD_{n}),
\), which uses the inferred model \(\PYXhat\equiv \PYXhat(y|\x;\calD_{n})\) to approximate the optimal (Bayes) decision rule
\(
f^\star(\x)  \triangleq   \argmax_{y\in\calY}~\PYX(y|\x).
\)
Notice that $\PYXhat$ can be interpreted as the prediction of the class (label) posterior probability given a sample (e.g., \(\PYXhat(y~|\x)\equiv \fun{softmax}(\mathbf{\x})_y\)), while \(\PYX\) is the true (unknown)  probability.

In general, during training, the empirical cross-entropy loss is minimized to learn the model \(\PYXhat\) from the training set \(\calD_n\) by optimizing the parameters of a model. The cross-entropy loss \(\mathcal{L}_{\mathrm{CE}}(\calD_n)\) is defined as
\begin{align}
    \label{eq:cross_entropy}
     & \mathcal{L}_{\mathrm{CE}}(\calD_n)
    \nonumber
    = -\frac{1}{n}\sum_{i=1}^{n}\Bigg[\\
                                    &\sum_{j=1}^{C}\PYX(y_j \mid \x_i)\log\PYXhat(y_j \mid \x_i;\calD_n)\Bigg].
\end{align}
Minimizing \Cref{eq:cross_entropy} encourages the model to assign high probabilities to the correct classes (cf. \cite{shalev2014understanding,bishop2006pattern,goodfellow2016deep}).
For each sample \(\x\), \Cref{eq:cross_entropy_internal} indicates the likelihood of the predicted class probabilities, i.e.,
\begin{align}
    \label{eq:cross_entropy_internal}
    \W=\sum_{j=1}^{C} p_{j}\log\left(\phat_{j}\right),
\end{align}
where \(\phat=\PYXhat(\cdot|\x;\calD_n)\) represents the predicted class probabilities for the input \(\x\) and \(p\) is the true posterior probability\footnote{In the empirical setting, the true posterior is usually represented as one-hot encoded vectors, and \Cref{eq:cross_entropy} becomes \(\mathcal{L}_{\mathrm{CE}}(\calD_n)=-\frac{1}{n}\sum_{i=1}^{n}\log\PYXhat(y_i \mid \x_i;\calD_n)\). In this case, the term in \Cref{eq:cross_entropy_internal} becomes \(\W=\log\left(\phat_{j}\right)\), where \(j\) is the index of the true class label for the input \(\x\).}.

Inspired by \cite{chow1970optimum,markowitz1952portfolio,liu2019deep}, we explicitly account for uncertainty by modifying the training objective in \Cref{eq:cross_entropy}, to enable the model to learn when to abstain. Specifically, we introduce a reservation variable represented as an additional class label, allowing the model to defer predictions when uncertainty is high. In particular, the term in \Cref{eq:cross_entropy_internal} is replaced by
\begin{align}
    \label{eq:reject_cross_entropy_internal}
    \W{r}=\sum_{j=1}^{C} p_{j}\log\left(\costmult\cdot\phat_{j}+\phat_{C+1}\right),
\end{align}
where \(\phat_{C+1}\) is the reservation variable for a given input \(\x\), subject to the constraints
\begin{align}
    \label{eq:reject_cross_entropy_sum}
    \sum_{j=1}^{C} \phat_{j}+\phat_{C+1}=1,
\end{align}
and
\begin{align}
    \label{eq:reject_cross_entropy_nonneg}
    \phat_j\ge 0,\ \forall j\in\{1,\ldots,C+1\},
\end{align}
where \(\costmult\in\R_{+}\) is a positive hyperparameter that controls the trade-off between making a prediction and abstaining. Higher values of \(\costmult\) encourages the model to make predictions, while lower values promote abstention.
Though neural networks provide a practical means of obtaining (generally suboptimal) solutions that can be applied to arbitrary inputs, as noted in \cite{Hornik1991} and \cite{liu2019deep}, a principled reformulation and analysis of the underlying optimization problem (cf. \cite{boyd2004convex}) can yield further theoretical insights. In particular, such an analysis allows us to characterize the uncertainty threshold for abstention and identify previously unrecognized degenerate cases. These findings provide further insight into the principled management of epistemic uncertainty in high-stakes applications, such as fault detection in medical devices.

\subsection{Illustrative example}
\label{subsec:illustrative_example}
We first illustrate the effect of the reservation variable on a toy classification problem with two-dimensional features and (C=3) classes in \Cref{fig:illustrative_example}. Each class has the same amount of samples. A simple fully connected neural network with two hidden layers of 32 units and ReLU activations is used. The output layer contains four units corresponding to the three target classes and one reservation output. The model is trained for 150 epochs using full-batch Adam with a learning rate of \(5\times10^{-3}\) and reservation reward \(r=2\). At inference time, an observation is accepted when its largest class probability exceeds its reservation probability and is otherwise rejected. All input features are standardized using statistics computed from the training set.

As shown in \Cref{fig:decision_regions}, introducing the reservation variable allows the model to identify and reject observations concentrated around the decision boundaries. These regions correspond to inputs for which the classifier has difficulty distinguishing between competing classes and are therefore associated with greater predictive uncertainty.

This behavior translates directly into improved performance on the retained predictions. The selective risk curve in \Cref{fig:risk_curve} shows that reducing the coverage to approximately \(81\%\) results in a reduction in classification risk of approximately \(50\%\). Thus, by abstaining on a relatively small fraction of uncertain observations, the model can substantially reduce the error rate among the predictions it retains.

Finally, \Cref{fig:margin_histogram} provides further insight into the mechanism underlying this behavior. Rejected observations are predominantly associated with smaller classification margins, indicating weaker separation between the most likely competing classes. Together, these results illustrate how the reservation mechanism enables the model to identify uncertain predictions and trade coverage for lower classification risk.

\subsection{Optimization problem with the reservation variable}
\label{subsec:optimization_problem}

We now present the main theoretical results of this work, providing a principled characterization of the role of the reservation variable in managing epistemic uncertainty. For the sake of generality, in this section, we consider true posterior probabilities \(p=\PYX(\cdot\mid\x)\), rather than empirical one-hot encoded labels. Results for one-hot encoded labels can be obtained as a special case, often reducing general results to trivial ones. Proofs based on the \gls*{kkt} optimality conditions, together with the necessary auxiliary results, are provided as supplementary material in the appendix.

\begin{restatable}{theorem}{theoremKKT}
    \label{theorem:kkt}
    Let $\hat{\mathbf p}^\star$ be a \gls*{kkt} optimal point for the maximization of \Cref{eq:reject_cross_entropy_internal} with constraints \Cref{eq:reject_cross_entropy_sum,eq:reject_cross_entropy_nonneg}, and suppose that $\hat p^\star_{C+1}>0$. Then there exists a scalar $\lambda>0$ such that, for every class $i\in\{1,\ldots,C\}$,
    \begin{align}
        \label{eq:kkt_conditions_solution}
        \hat p_i^\star = \max\left\{\frac{p_i}{\lambda} - \frac{\hat p_{C+1}^\star}{\costmult}, \,0 \right\}.
    \end{align}
\end{restatable}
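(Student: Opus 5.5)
We write out the \gls*{kkt} conditions for maximizing the concave objective $\W{r}=\sum_{j=1}^{C}p_j\log(\costmult\hat p_j+\hat p_{C+1})$ under the simplex constraints \Cref{eq:reject_cross_entropy_sum,eq:reject_cross_entropy_nonneg}, and then solve the stationarity conditions class by class. Introduce a multiplier $\lambda$ for the equality constraint and multipliers $\mu_j\ge 0$, $j\in\{1,\ldots,C+1\}$, for the nonnegativity constraints, with complementary slackness $\mu_j\hat p_j^\star=0$. Write $s_j\triangleq\costmult\hat p_j^\star+\hat p_{C+1}^\star$. Since $\hat p^\star_{C+1}>0$, every $s_j>0$, so the objective is differentiable at $\hat{\mathbf p}^\star$ and the \gls*{kkt} system is well defined.

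Stationarity with respect to $\hat p_i$, for $i\le C$, gives
\begin{align*}
\frac{\costmult p_i}{s_i}+\mu_i=\lambda,
\end{align*}
and with respect to $\hat p_{C+1}$ gives
\begin{align*}
\sum_{j=1}^{C}\frac{p_j}{s_j}+\mu_{C+1}=\lambda.
\end{align*}
Complementary slackness and $\hat p^\star_{C+1}>0$ force $\mu_{C+1}=0$, so $\lambda=\sum_j p_j/s_j$. Because $p$ is a probability vector, some $p_j>0$, and hence $\lambda>0$. This is exactly the positivity claimed in the statement, and obtaining it is the one place where the hypothesis $\hat p^\star_{C+1}>0$ is essential.

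Next we split into cases for each $i\le C$, normalizing the scale of the multiplier as discussed at the end.
\begin{itemize}
\item \emph{Case $\hat p_i^\star>0$.} Then $\mu_i=0$ and $\costmult p_i/s_i=\lambda$. Solving gives $\costmult\hat p_i^\star+\hat p^\star_{C+1}=\costmult p_i/\lambda$, i.e.\ $\hat p_i^\star=p_i/\lambda-\hat p^\star_{C+1}/\costmult$, and this quantity is positive.
\item \emph{Case $\hat p_i^\star=0$.} Then $s_i=\hat p^\star_{C+1}$ and $\mu_i\ge 0$ give $\costmult p_i/\hat p^\star_{C+1}\le\lambda$, i.e.\ $p_i/\lambda-\hat p^\star_{C+1}/\costmult\le 0$. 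The maximum with $0$ therefore returns $0=\hat p_i^\star$.
\end{itemize}
Combining the two cases yields \Cref{eq:kkt_conditions_solution} for every $i\in\{1,\ldots,C\}$.

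The only delicate point is bookkeeping of the multiplier scale. Stationarity for $\hat p_i$ carries a factor $\costmult$, so one must check that the same $\lambda$ appears consistently in all $C$ coordinate equations, possibly after rescaling $\lambda$ by $\costmult$ so that the formula reads $p_i/\lambda$ rather than $\costmult p_i/\lambda$. We would settle this scaling explicitly, writing the stationarity equation in the form $\costmult p_i/(\costmult\hat p_i+\hat p_{C+1})=\lambda$ and dividing by $\costmult$. We would also note where $\hat p^\star_{C+1}>0$ is used: it guarantees differentiability ($s_j>0$), it gives $\mu_{C+1}=0$, and it makes $\lambda$ expressible as the positive sum $\sum_j p_j/s_j$.
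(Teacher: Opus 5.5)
Your proof is correct and follows the paper's argument essentially step for step. It uses the same Lagrangian and stationarity equations, obtains $\mu_{C+1}=0$ from $\hat p^\star_{C+1}>0$ so that $\lambda=\sum_{j}p_j/(\costmult\hat p^\star_j+\hat p^\star_{C+1})>0$, and makes the same two-case split via complementary slackness. The scaling concern in your last paragraph is moot: dividing $\costmult\hat p_i^\star+\hat p^\star_{C+1}=\costmult p_i/\lambda$ by $\costmult$ already gives $p_i/\lambda-\hat p^\star_{C+1}/\costmult$ with the very same $\lambda$, exactly as your first bullet shows, so no rescaling is needed.
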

Moreover, the quantity
\(
\label{eq:threshold}
\frac{\hat p_{C+1}^\star}{\costmult}
\)
defines an adaptive threshold that determines which class probabilities
remain active at the optimum. An increase in the optimal reserve probability
$\hat p_{C+1}^\star$, or a decrease in the cost multiplier $\costmult$, raises
this threshold, thereby causing a larger number of class probabilities to
become inactive. This characterization provides an optimization-based
interpretation of the reserve output as an adaptive measure of uncertainty.
The corresponding derivation is provided in \Cref{sec:proof_kkt_conditions} of the appendix.

\begin{restatable}{theorem}{theoremThreshold}
    \label{theorem:threshold}
    Consider the maximization of \Cref{eq:reject_cross_entropy_internal} subject to \Cref{eq:reject_cross_entropy_sum,eq:reject_cross_entropy_nonneg}. The cost-multiplier mixed-regime is \begin{align*} 1<\costmult<C. \end{align*} In particular, if \(\costmult<1\), full reservation is optimal, whereas if \(\costmult>C\), every optimal solution has \(\phat_{C+1}^\star=0\) and the prediction coordinates satisfy \(\phat_j^\star=p_j\) for \(j=1,\ldots,C\). The boundary cases \(\costmult=1\) and \(\costmult=C\) are degenerate: the optimizer need not be unique, and distinct prediction--reservation allocations may attain the same optimal value.
\end{restatable}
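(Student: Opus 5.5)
The problem is a concave maximization over a polytope, so I would argue everything through the \gls*{kkt} conditions, which are necessary and sufficient here. Write $q_j=\phat_j$ for $j\le C$ and $r=\phat_{C+1}$. The objective $\sum_j p_j\log(\costmult q_j+r)$ is concave, as a composition of $\log$ with affine maps, and the constraints \Cref{eq:reject_cross_entropy_sum,eq:reject_cross_entropy_nonneg} are affine. Stationarity with multiplier $\lambda$ for \Cref{eq:reject_cross_entropy_sum} reads
\begin{align*}
\frac{\costmult\, p_j}{\costmult q_j+r}\le\lambda \ (\text{equality if } q_j>0),\qquad \sum_{j=1}^{C}\frac{p_j}{\costmult q_j+r}\le\lambda \ (\text{equality if } r>0).
\end{align*}
Throughout I would restrict to the support $S=\{j:p_j>0\}$. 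Coordinates with $p_j=0$ contribute nothing to the objective, so an optimizer never puts mass there when that mass could instead raise some active term.

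\emph{Case $\costmult<1$.} A direct bound suffices. Since $\costmult q_j+r\le q_j+r\le 1$, the objective is at most $0$. Equality requires $\costmult q_j+r=1$ for all $j\in S$, which forces $q_j=0$ there and $r=1$. The value $0$ is attained at $r=1$, so full reservation is optimal, and uniqueness follows from the strict inequality $\costmult q_j<q_j$ whenever $q_j>0$. As a cross-check, the \gls*{kkt} conditions hold at $(q,r)=(0,1)$ with $\lambda=1$: we get $\costmult p_j\le 1$ and $\sum_j p_j=1$.

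\emph{Case $\costmult>C$.} I would verify \gls*{kkt} at $q=p$, $r=0$. Each active term gives $\costmult p_j/(\costmult p_j)=1$, so $\lambda=1$. The reservation inequality becomes $\sum_{j\in S}1/\costmult=|S|/\costmult\le C/\costmult<1$, so it holds strictly. By concavity this point is optimal.

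\emph{Uniqueness for $\costmult>C$.} This is the delicate step. The objective is strictly concave in the vector $u=(\costmult q_j+r)_{j\in S}$, so every optimizer shares the same $u^\star=\costmult p$. Now suppose some optimizer had $r>0$. Its $r$-\gls*{kkt} condition would then hold with equality, $\sum_{j\in S}p_j/u_j^\star=\lambda$. The same $\lambda$ works for all optimizers, because the multiplier is determined by the active $q$-coordinates, so $\lambda=1$. But the left side equals $|S|/\costmult<1$, a contradiction. Hence $r=0$, and then $q_j=p_j$ on $S$ follows from $u^\star=\costmult p$. Finally, $q_j=0$ off $S$ because the mass already sums to $1$.

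\emph{Mixed regime $1<\costmult<C$.} Here the claim is that neither extreme is forced, so I would exhibit one distribution of each kind.
\begin{itemize}
\item For a one-hot $p=e_1$, the objective is $\log(\costmult q_1+r)$. Since $\costmult>1$, it is maximized uniquely at $q_1=1$, so there is no reservation.
\item For the uniform $p$, the point $(0,1)$ satisfies \gls*{kkt}: $\lambda=1$ and $\costmult/C<1$. So full reservation is optimal.
\end{itemize}
Thus which allocation is optimal depends on $p$. Intermediate $p$ yield interior reservation levels, consistent with \Cref{theorem:kkt}.

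\emph{Boundary cases $\costmult\in\{1,C\}$.} I would give explicit non-uniqueness examples.
\begin{itemize}
\item For $\costmult=1$ and $p=e_1$, the objective is $\log(q_1+r)$. It equals $0$ for every split with $q_1+r=1$.
\item For $\costmult=C$ and uniform $p$, take $q_j=s/C$ with $r=1-s$. Each term is then $\log(s+r)=0$ for all $s\in[0,1]$, which matches the upper bound given by the \gls*{kkt} point at $s=1$.
\end{itemize}
In both cases a continuum of allocations attains the optimal value.
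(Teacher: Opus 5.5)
Your proof is correct, and for the case $\costmult>C$ it follows a genuinely different route from the paper.

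\textbf{The paper's route.} It argues by contradiction at a hypothetical optimizer with $\phat_{C+1}^\star>0$. It invokes the threshold formula of \Cref{theorem:kkt} and splits on whether the classes inactive at that optimizer carry posterior mass. In the first branch it derives $\lambda=1$ and sums the inactivity thresholds $p_j\le\phat_{C+1}^\star/\costmult$ to get $\costmult\le C$. In the second branch it forces the number of active classes to equal $\costmult$. It then still needs Gibbs' inequality on the no-reservation subproblem to conclude $\phat_j^\star=p_j$.

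\textbf{Your route.} You certify $(p,0)$ as a KKT point with strict slack $|S|/\costmult<1$ in the reservation coordinate, where $S=\{j:p_j>0\}$. You then use strict concavity in the aggregated variables $u_j=\costmult q_j+r$ to show that every optimizer shares $u^\star=\costmult p$. This pins down the gradient at every optimizer, excludes $r>0$, and yields $q=p$ directly. It needs no case split, no appeal to \Cref{theorem:kkt}, and no Gibbs step.

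\textbf{One sentence needs repair.} The justification ``the multiplier is determined by the active $q$-coordinates'' does not cover an alternative optimizer with no active $q$-coordinate, namely full reservation. Either of these fixes works:
\begin{itemize}
\item Since the gradient depends only on $u^\star$, every $q$-inequality at that optimizer reads $1\le\lambda'$, while $r>0$ forces $\lambda'=|S|/\costmult<1$.
\item Alternatively, cite the fact that KKT multipliers of a convex program are common to all primal optima.
\end{itemize}

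\textbf{Comparison of the remaining cases.} Your $\costmult<1$ bound and your boundary examples coincide with the paper's. For $\costmult=C$ you certify the optimal value via the KKT point at $s=1$, where the paper uses Jensen; the two are equivalent. Your argument also gives more than the paper does in two places. First, it shows uniqueness of full reservation for $\costmult<1$. Second, your one-hot and uniform examples show that both extremes actually occur when $1<\costmult<C$, whereas the paper identifies the mixed regime only by elimination.
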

The proof is provided in \Cref{sec:proof_threshold} of the appendix.
\Cref{theorem:threshold} provides a principled characterization of the cost multiplier \(\costmult\) and its relationship to the number of classes \(C\), providing guidance for selecting \(\costmult\) in practice, as shown in the ablation studies in \Cref{sec:experiments}.

%% file: tex/tex_files/sections/experiments.tex
\input{tex/tex_files/plots/synthetic_dist_fig}

In this section, we present our numerical results and summarize the main findings through a series of plots.

We begin by examining how epistemic uncertainty affects the baseline benchmark introduced in \cite{shenoy2026uncertainty} and evaluate the extent to which our proposed approach mitigates its effects. Despite the relative simplicity of this task, it illustrates how models with otherwise strong predictive performance can remain susceptible to epistemic uncertainty, and how abstention can reduce the resulting decision risk. Furthermore, we introduce a more challenging variant of the original benchmark, in which the final detection task is more difficult by tweaking the data generative process, allowing us to evaluate the effectiveness of our mitigation approach under more realistic conditions.

Finally, we present a real-world medical device case study and introduce a benchmark that is publicly accessible through the \gls*{protookn}, demonstrating the practical applicability of our approach in a high-stakes setting.

Following standard experimental practice, we report results over five independent runs with different random seeds to account for variability due to random initialization.
The results are reported in terms of \textbf{accuracy} or its converse, i.e., empirical \textbf{risk}. As we must account for the fact that the model may abstain on some samples, we also report the \textbf{coverage}, i.e., the fraction of samples on which the model makes a prediction. Importantly, note that in case of abstention, the accuracy and risk are only computed on the samples for which the model makes a prediction, i.e., the coverage.

Thus, while our framework may abstain on some samples which may be correctly classified, \emph{increased accuracy and reduced risk} on the retained samples is a clear indication that \emph{the abstained set must contain a higher proportion of misclassified samples than the original dataset}, i.e. the abstention mechanism is preferentially rejecting predictions that are more likely to be incorrect.

For the synthetic experiments, we consider Head 1 of the multi-head cascade \gls*{mlp} architecture introduced in \cite{shenoy2026uncertainty}, which performs defect classification. The model consists of two hidden layers with 256 neurons each, ReLU activations, and a dropout rate of \(0.1\), and it is trained using Adam with a learning rate of \(10^{-3}\) and a batch size of 128 for up to 40 epochs, with early stopping using a patience of 6 epochs.

For the real-world experiments, the model consists of two hidden layers with 256 and 128 neurons, respectively, and ReLU activations, and it is trained using Adam \cite{kingma2014method} with a learning rate of \(10^{-3}\), weight decay of \(10^{-4}\), and a batch size of 32 for up to 80 epochs, with early stopping using a patience of 10 epochs.

All models are implemented in PyTorch \cite{paszke2019pytorch}. Development and preliminary experiments were conducted on Apple Silicon, while the main experimental sweeps were executed on a SLURM-based computing cluster using NVIDIA A100 and H100 GPUs, with one GPU allocated per run.

The FDA MAUDE and 510(k) source data are publicly available from the U.S.~Food and Drug Administration~\cite{fda_maude,fda_510k}. The integrated Medical Device Knowledge Graph currently comprises 17.7 million triples across 29 classes and 887{,}200 subjects; it is registered in the Open Knowledge Network registry (\url{https://registry.okn.us/registry/kgs/medical-device-kg/}) and publicly queryable through the FRINK SPARQL endpoint (\url{https://apps.okn.us/medical-device-kg/sparql}).
\subsection{Synthetic Benchmark}
\label{subsec:synthetic-benchmark}
We first consider the benchmark introduced in \cite{shenoy2026uncertainty}, which we refer to as the \emph{baseline}. To the best of our knowledge, the original benchmark data, which is also synthetic, has not been publicly released. We therefore reproduce the data-generation process following the description provided in the original work (cf.~\Cref{subsec:benchmark_gen,alg:generator}). To assess the fidelity of our reconstruction, we verify that a comparable \gls*{mlp} achieves performance consistent with that reported in the original work. Given the synthetic nature and relative simplicity of this benchmark, we primarily use it as a proof of concept to illustrate the effects of epistemic uncertainty and the benefits of abstention under controlled conditions. We subsequently introduce a more challenging synthetic scenario to evaluate whether these benefits persist as the underlying detection task becomes more complex.
\input{tex/tex_files/plots/synthetic_cascade_vs_abstention_fig}
The problem at hand considers a three-class classification task (\emph{no defect}, \emph{open circuit}, and \emph{solder bridging}) based on six process features: ambient relative humidity, ambient temperature, paste viscosity, stencil thickness, time above liquidus, and peak reflow temperature.
The per-feature class distributions are shown in \Cref{fig:baseline}. As can be observed, several features provide a relatively clear separation between classes, making the resulting classification task comparatively simple. Nevertheless, \Cref{fig:cascade-vs-abstention-synthetic} shows that when abstaining on less than \(10\%\) of the samples, the accuracy increases from \(95.10\%\) to \(98.17\%\), an improvement of \(3.07\) percentage points, corresponding to a \(62.65\%\) relative reduction in classification error. This indicates that the abstention mechanism preferentially identifies predictions that are more likely to be misclassified, thereby improving the accuracy of the retained decisions.

Next, we artificially increase the difficulty of the classification task by increasing the overlap between the class-conditional feature distributions, thereby increasing the Bayes error, as illustrated in \Cref{fig:overlap_harder}. As expected, the overall classification accuracy decreases under this more challenging setting. Nevertheless, \Cref{fig:cascade-vs-abstention-synthetic} shows that abstention continues to improve the accuracy of the retained predictions, demonstrating the effectiveness of the approach even as the underlying classification problem becomes more difficult. In this case, abstaining yields an accuracy increase from \(87.59\%\) to \(89.94\%\), an improvement of \(2.35\) percentage points, corresponding to an \(18.94\%\) relative reduction in classification error.

Finally, we present an ablation study across different target coverage levels in \Cref{fig:risk-baseline} for the baseline setting and \Cref{fig:risk-harder} for the higher-overlap setting. An additional ablation study across different payoff values, which further supports the results reported in \Cref{fig:selective-risk}, is provided in the appendix (cf. \Cref{subsec:additional_numerical_results,fig:ablation-k-synthetic}).

\input{tex/tex_files/plots/synthetic_ablation_study_fig}

\subsection{Experiment on real medical devices data}
\label{subsec:real-medical-devices-data}

In this section, we present the results of our experiments on real-world medical device data. The dataset consists of report narratives, extracted from \cite{fda_maude,fda_510k} through the contributed \gls*{kg} within the \gls*{protookn} project (cf. \cite{protookn}).
\input{tex/tex_files/plots/maude_ablation_fig.tex}
The final dataset comprises 266,170 reports (185,477 malfunction, 44,995 injury, 33,897 serious injury, 1801 death), partitioned into training (64\%), validation (16\%), and test (20\%) splits and stratified by outcome class. Severity labels are derived from the MAUDE event type field as follows: reports typed Death and Malfunction map directly to the Death and Malfunction classes, while Injury reports are split into basic vs. serious injury. An injury is classified as serious if its narrative matches a serious-injury cue and basic injury otherwise\footnote{Keywords that identify serious injury are the ones that are described as life threatening situation, permanent impairment and surgical interventions.}, reports typed Other or No answer, or carrying no narrative are discarded. To prevent label leakage, sentences in the narratives whose wording contains or closely matches the outcome labels are removed. Duplicate reports were removed based on exact narrative text duplications.

These narratives consist of structured key-value pairs, including textual descriptions of adverse events associated with medical devices. The datasets are connected by the device identifiers, which are used to link the narratives to the corresponding device information.

The task we select is to classify these text-based features into four outcome classes: malfunction, basic injury, serious injury, and death, in increasing order of severity. The underlying objective is to identify patterns associated with adverse events in previously unseen data, with the broader goal of supporting their early detection and potentially preventing similar events from occurring in the future.
Text content is represented using TF-IDF features with up to 50,000 unigram and bigram features, selected for scalability, interpretability, and independence from pretrained language models. Further details are provided in the publicly available code repository.

\input{tex/tex_files/plots/maude_barplot_fig.tex}

\Cref{fig:maude-k} and \Cref{fig:maude-coverage} show the effectiveness of abstention on the retained predictions, confirming the expected results. Additionally, for this task, it is possible to achieve almost perfect accuracy for \(\approx 70\%\) coverage. In operational terms, abstention defers roughly one in ten test reports to human review in exchange for a substantially lower error rate among retained predictions, consistent with triage workflows in post-market surveillance.
This indicates that the abstention mechanism is preferentially rejecting predictions that are more likely to be incorrect, thereby improving the accuracy of the retained predictions, though nearly optimal accuracy may incur a high cost in terms of the number of predictions that are abstained from. The ablation study across different target coverage levels in \Cref{fig:maude-k} further supports this conclusion, demonstrating that the benefits of abstention persist across a range of coverage levels.

Finally, \Cref{fig:maude-noabstain-vs-abstain} shows the performance comparison between the standard cross-entropy model and its version equipped with abstention, showing consistent and favorable results across multiple independent runs.

\section{Limitations and Future Work}
\label{subsec:limitations-and-future-work}

Given the importance of uncertainty-aware decision-making in medical device manufacturing, this work represents an initial step toward systematically studying epistemic uncertainty in this domain. A key limitation is the current availability of real-world benchmarks that capture both manufacturing defects and the uncertainty associated with their detection. Consequently, broader evaluation across devices, defect types, and manufacturing settings remains necessary to establish the generalizability of these findings.

Two characteristics of the underlying data source should be kept in mind when interpreting these results. MAUDE is a passive, spontaneous-report surveillance system: report volumes reflect reporting behavior, device utilization, and regulatory reporting requirements rather than confirmed failure incidence, and narratives vary widely in completeness and may
contain duplicates~\cite{chung2020signal}. Accordingly, the benchmark task should be understood as report-level severity classification under realistic label noise, a regime in which epistemic uncertainty is expected to be substantial and abstention correspondingly valuable, and
not as an estimate of device failure rates. These characteristics make MAUDE-derived data a stringent, practically relevant testbed for uncertainty-aware classification~\cite{mao2021nlp,sohoni2026llm}. Additionally, duplicates were removed by exact narrative matching, yet MAUDE contains near-duplicate reports of the same event (e.g., manufacturer and user-facility submissions), which random splitting may place on both sides of the train--test split. Near-duplicate detection and temporal splits are natural extensions to better approximate deployment conditions.

We hope this work will encourage further research on epistemic uncertainty in medical device manufacturing, particularly through the development of larger, more diverse, and application-relevant benchmarks. Resources such as the \gls*{protookn}, the data repositories in \cite{fda_510k,fda_maude}, together with other tools for integrating and organizing heterogeneous medical device data, provide an opportunity to facilitate the construction and sharing of such datasets. Such efforts could support more comprehensive evaluations of uncertainty-aware methods and provide a stronger empirical foundation for their use in medical device quality assurance.

%% file: tex/tex_files/plots/synthetic_dist_fig.tex
\begin{figure*}[!htbp]
    \centering

    \begin{subfigure}{1.\textwidth}
        \centering
        \includegraphics[width=\textwidth]{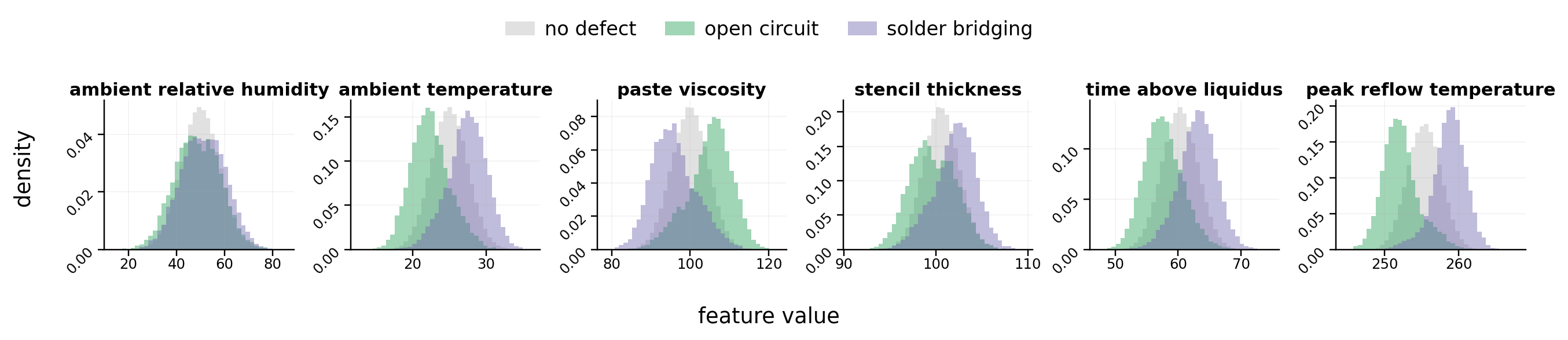}
        \caption{Baseline.}
        \label{fig:baseline}
    \end{subfigure}

    \vspace{0.5em}

    \begin{subfigure}{1.\textwidth}
        \centering
        \includegraphics[width=\textwidth]{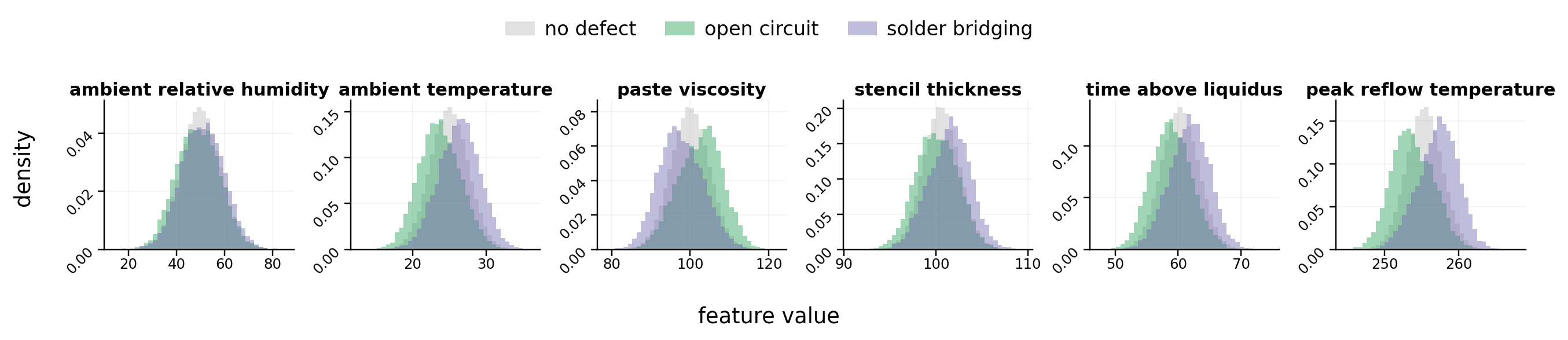}
        \caption{Increased class posterior overlap.}
        \label{fig:overlap_harder}
    \end{subfigure}

    \caption{Synthetic benchmark data -- Feature distributions under different levels of class posterior overlap. \Cref{fig:baseline} shows the setting for the benchmark introduced in \cite{shenoy2026uncertainty}, while \Cref{fig:overlap_harder} shows a more challenging setting with increased overlap between the class-conditional feature distributions, controlled by the target Bayes error parameter in \Cref{alg:generator}.}
    \label{fig:feature-separability}
\end{figure*}

%% file: tex/tex_files/plots/synthetic_cascade_vs_abstention_fig.tex
\begin{figure}[!htbp]
    \includegraphics[width=.48\textwidth]{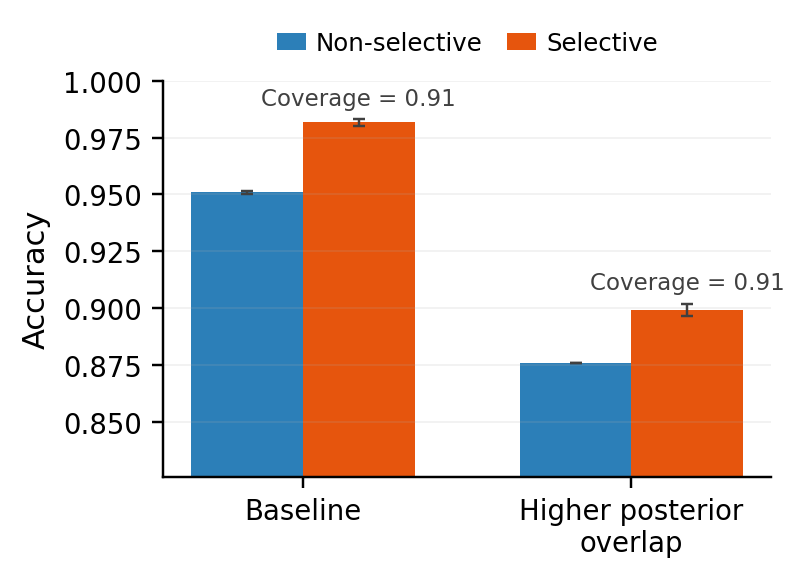}
    \caption{Synthetic benchmark data -- (Best) Accuracy bar plot over multiple seeds for the baseline benchmark, comparing the performance of a single model w/o and w/ abstention (target coverage \(\ge 90\%\)). 
    For the baseline setting, abstention increases mean accuracy from \(95.10\%\) to \(98.17\%\) (\(+3.07\) percentage points), reducing classification error by \(62.65\%\). At higher overlap, it increases mean accuracy from \(87.59\%\) to \(89.94\%\) (\(+2.35\) percentage points), reducing classification error by \(18.94\%\).
    The narrow error bars, which represent variability across multiple random seeds, indicate that performance remains consistent under different random initializations.
    }
    \label{fig:cascade-vs-abstention-synthetic}
\end{figure}

%% file: tex/tex_files/plots/synthetic_ablation_study_fig.tex
\begin{figure}[!htbp]
    \centering
    \begin{subfigure}[t]{0.48\linewidth}
        \centering
        \includegraphics[width=\linewidth]{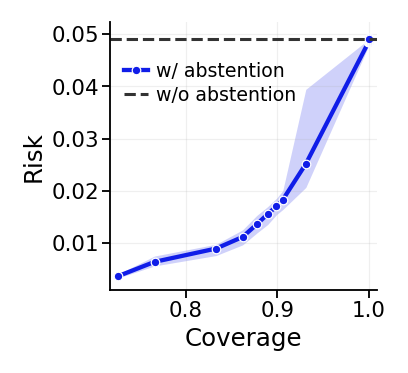}
        \caption{Baseline}
        \label{fig:risk-baseline}
    \end{subfigure}
    \hspace{0.005\linewidth}
    \begin{subfigure}[t]{0.48\linewidth}
        \centering
        \includegraphics[width=\linewidth]{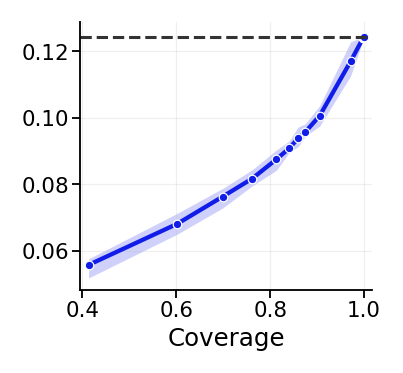}
        \caption{Increased class posterior overlap}
        \label{fig:risk-harder}
    \end{subfigure}
    \caption{Synthetic benchmark data -- Risk as a function of coverage. Results are shown for the baseline setting (\Cref{fig:risk-baseline}) and the more challenging setting with increased class posterior overlap (\Cref{fig:risk-harder}). In both cases, the risk decreases as the coverage decreases, indicating that the abstention mechanism preferentially rejects predictions that are more likely to be misclassified. Results over several random seeds are shown, with the shaded area indicating the standard deviation, showing that the results are consistent across different independent runs.}
    \label{fig:selective-risk}
\end{figure}

%% file: tex/tex_files/plots/maude_ablation_fig.tex
\begin{figure}[t]
    \centering
    \begin{subfigure}[t]{0.51\columnwidth}
        \centering
        \includegraphics[width=\linewidth]{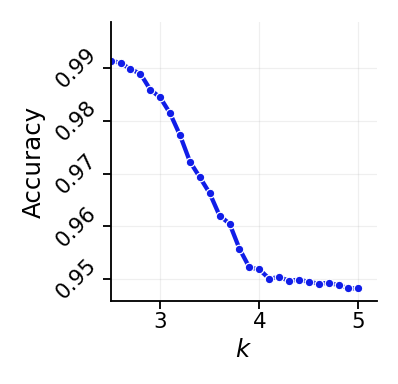}
        \caption{Accuracy vs.\ payoff $k$.}
        \label{fig:maude-k}
    \end{subfigure}
    \hspace{-.05\columnwidth}
    \begin{subfigure}[t]{0.49\columnwidth}
        \centering
        \includegraphics[width=\linewidth]{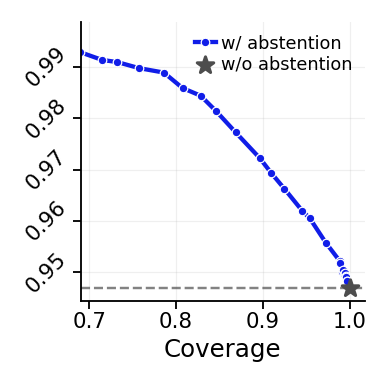}
        \caption{Accuracy vs.\ coverage.}
        \label{fig:maude-coverage}
    \end{subfigure}

    \caption{Medical device data -- Performance across different values of \(k\) and coverage.}
    \label{fig:maude-selective}
\end{figure}

%% file: tex/tex_files/plots/maude_barplot_fig.tex
\begin{figure}
    \includegraphics[width=.48\textwidth]{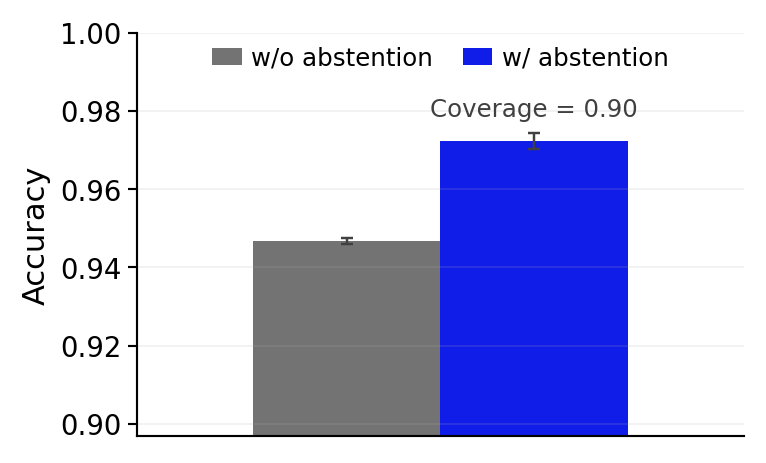}
    \caption{Medical device data -- Performance comparison between no abstention and abstention. Results are reported over several random seeds, showing narrow error bars indicating consistent results across multiple independent runs. The abstention mechanism allows for an increase in average accuracy, from \(94.68\%\) to \(97.23\%\), an improvement of \(2.55\) percentage points, corresponding to a \(47.93\%\) relative reduction in classification error with a target coverage of \(90\%\), further demonstrating its effectiveness.
    }
    \label{fig:maude-noabstain-vs-abstain}
\end{figure}

%% file: tex/tex_files/sections/conclusion.tex
In this work, we investigated the impact of epistemic uncertainty on defect detection in medical device manufacturing, a first step particularly important when incorrect predictions can propagate through subsequent root-cause analysis and risk assessment. We showed that explicitly accounting for this uncertainty through abstention provides a simple yet principled mechanism for reducing the risk associated with unreliable model decisions.

Building on a cost-sensitive classification framework, we analyzed a rejection strategy that allows the model to abstain from uncertain predictions rather than forcing a decision for every input. Our experiments on synthetic benchmarks show that abstaining on a relatively small fraction of predictions can reduce decision risk while improving the reliability of the remaining decisions. We further evaluated the approach using real-world medical device report data, demonstrating its applicability beyond controlled synthetic settings. To support reproducibility and further research, we have shared a link to our open-source code.

More broadly, our results suggest that predictive performance alone may not adequately characterize model reliability in high-stakes defect detection. Providing models with the ability to identify and defer uncertain decisions offers a practical mechanism for incorporating uncertainty into automated quality assurance and provides a foundation for integrating such decisions with downstream root-cause analysis, risk assessment, and human review.

%% file: tex/tex_files/sections/appendix.tex
In this section we present the proofs for the main theoretical results in \Cref{sec:framework}. We first present some auxiliary results that will be used in the proofs of \Cref{theorem:kkt,theorem:threshold}. For the sake of generality, in this section, we consider true posterior probabilities \(p=\PYX(\cdot\mid\x)\), rather than empirical one-hot encoded labels. Indeed, let \(Z=(Z_1,\ldots,Z_C)\) denote the observed one-hot target, where
\begin{align}
    Z_i = \mathbbm{1}\{Y=i\},
    \qquad i\in\{1,\ldots,C\}.
\end{align}
Thus, for every individual observation, \(Z_i\in\{0,1\}\) and
\begin{align}
    \sum_{i=1}^{C} Z_i = 1.
\end{align}
We distinguish this observed target from the true conditional class
probabilities
\begin{align}
    p_i(\x)
    := \mathbb{P}(Y=i\mid X=\x)
    = \mathbb{E}[Z_i\mid X=\x],
\end{align}
for \(i\in\{1,\dots,C\}\).
Hence, although the observed target \(Z\) is one-hot, the conditional
posterior
\begin{align}
    \mathbf{p}(\x)
    =
    \bigl(p_1(\x),\ldots,p_C(\x)\bigr)
\end{align}
need not be one-hot and satisfies
\begin{align}
    p_i(\x)\geq 0,
    \qquad
    \sum_{i=1}^{C}p_i(\x)=1.
\end{align}

For a single observation with one-hot target \(Z\), consider the
objective
\begin{align}
    \sum_{i=1}^{C}
    Z_i
    \log\left(
    \costmult\phat_i+\phat_{C+1}
    \right).
\end{align}
Taking the conditional expectation with respect to the label \(Y\)
given \(X=\x\) yields
\begin{align}
     & \mathbb{E}\left[
                     \left.
                     \sum_{i=1}^{C}
                     Z_i
                     \log\left(
                     \costmult\phat_i+\phat_{C+1}
                     \right)
                     \,\right|\,X=\x
                     \right]
    \nonumber                                    \\
     & \qquad=
    \sum_{i=1}^{C}
    \mathbb{E}\left[
                  Z_i\mid X=\x
                  \right]
    \log\left(
    \costmult\phat_i+\phat_{C+1}
    \right)
    \nonumber                                    \\
     & \qquad=
    \sum_{i=1}^{C}
    p_i(\x)
    \log\left(
    \costmult\phat_i+\phat_{C+1}
    \right).
\end{align}
Therefore, the quantities \(p_i(\x)\) appearing in the theoretical
optimization should be interpreted as the true conditional class
probabilities rather than as the observed one-hot labels. The empirical
loss is evaluated using one-hot targets, whereas the corresponding
conditional population objective is obtained by averaging this loss
over the conditional distribution of \(Y\) given \(X=\x\).

\subsection{Auxiliary results}
\label{subsec:auxiliary_results}
First, let us consider the problem of optimizing \Cref{eq:reject_cross_entropy_internal} in the probability simplex
\begin{align}
    \label{eq:probability_simplex}
    \nonumber
    \Delta^{C} = \Bigg\{\placeholderdist \in \mathbb{R}^{C+1}: & \; \placeholderdist_i \ge 0 \; \forall\ i\in\{1, \dots, C+1\}, \\
                                                               & \sum_{i=1}^{C+1} \placeholderdist_i = 1 \Bigg\}
\end{align}
\begin{restatable}{lemma}{lemmaSimplex}
    \label{lemma:simplex_properties}
    The probability simplex \(\Delta^{C}\) defined in \Cref{eq:probability_simplex} is convex, closed, and bounded.
\end{restatable}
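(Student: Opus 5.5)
We establish the three properties one at a time, writing \(\Delta^{C}\) as an intersection of simple sets. For each \(i\in\{1,\dots,C+1\}\) let \(H_i^{+}=\{\placeholderdist\in\R^{C+1}:\placeholderdist_i\ge 0\}\), and let \(A=\{\placeholderdist\in\R^{C+1}:\sum_{i=1}^{C+1}\placeholderdist_i=1\}\). Then
\begin{align*}
\Delta^{C}=A\cap\bigcap_{i=1}^{C+1}H_i^{+}.
\end{align*}

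\emph{Convexity.} Each \(H_i^{+}\) is a closed half-space and \(A\) is an affine hyperplane; both are convex. Directly, if \(\placeholderdist,\placeholderdist'\in\Delta^{C}\) and \(\theta\in[0,1]\), then every coordinate of \(\theta\placeholderdist+(1-\theta)\placeholderdist'\) is a nonnegative combination of nonnegative numbers. By linearity its coordinates sum to \(\theta+(1-\theta)=1\). An intersection of convex sets is convex, so \(\Delta^{C}\) is convex.

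\emph{Closedness.} The coordinate map \(\placeholderdist\mapsto\placeholderdist_i\) and the sum map \(\placeholderdist\mapsto\sum_i\placeholderdist_i\) are linear on the finite-dimensional space \(\R^{C+1}\), hence continuous. Therefore \(H_i^{+}\) is the preimage of the closed set \([0,\infty)\), and \(A\) is the preimage of the closed set \(\{1\}\). A finite intersection of closed sets is closed.

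\emph{Boundedness.} Take \(\placeholderdist\in\Delta^{C}\). Nonnegativity together with the sum constraint gives \(0\le\placeholderdist_i\le\sum_j\placeholderdist_j=1\) for each \(i\). Hence \(\|\placeholderdist\|_\infty\le 1\), equivalently \(\|\placeholderdist\|_2\le\sqrt{C+1}\), so \(\Delta^{C}\) lies in a ball.

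None of these steps is a genuine obstacle; the lemma is elementary. The only point needing care is the bound \(\placeholderdist_i\le 1\), which uses nonnegativity of the other coordinates: without it, the hyperplane \(A\) alone would be unbounded. By Heine--Borel, the three properties make \(\Delta^{C}\) compact. This is the form in which the lemma will later be invoked, to guarantee via Weierstrass that a maximizer of \Cref{eq:reject_cross_entropy_internal} exists, possibly with value \(-\infty\) at boundary points, a case to be handled separately.
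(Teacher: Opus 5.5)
Your proof is correct and follows the paper's argument essentially step by step: you verify convexity directly, you obtain closedness as a finite intersection of preimages of closed sets under the continuous coordinate and sum maps, and you derive boundedness from \(0\le \placeholderdist_i\le 1\). The only difference is in the final bound. The paper sharpens it to \(\|\placeholderdist\|_2\le 1\) using \(\placeholderdist_i^2\le \placeholderdist_i\), whereas you get \(\|\placeholderdist\|_2\le\sqrt{C+1}\); note that this follows from \(\|\placeholderdist\|_\infty\le 1\) rather than being equivalent to it, but either bound suffices for boundedness.
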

\begin{proof}
    Let $\placeholderdist,\placeholderdist^{\prime} \in \Delta^C$ and let $\gamma\in[0,1]$. Define
    \begin{align*}
        \placeholderdist^{\prime\prime} = \gamma \placeholderdist + (1-\gamma)\placeholderdist^{\prime}.
    \end{align*}
    For every coordinate $i$,
    \begin{align*}
        \placeholderdist^{\prime\prime}_i = \gamma \placeholderdist_i + (1-\gamma)\placeholderdist^{\prime}_i \ge 0,
    \end{align*}
    since $\placeholderdist_i,\placeholderdist^{\prime}_i\ge 0$ and $\gamma,1-\gamma\ge 0$. Moreover,
    \begin{align*}
        \sum_{i=1}^{C+1} \placeholderdist^{\prime\prime}_i = & \sum_{i=1}^{C+1}\bigl(\gamma \placeholderdist_i+(1-\gamma)\placeholderdist^{\prime}_i\bigr)        \\
        =                                                    & \gamma\sum_{i=1}^{C+1} \placeholderdist_i + (1-\gamma)\sum_{i=1}^{C+1} \placeholderdist^{\prime}_i \\
        =                                                    & \gamma+(1-\gamma) =  1.
    \end{align*}
    Hence $\placeholderdist^{\prime\prime}\in\Delta^C$, proving that $\Delta^C$ is convex. Next, observe that
    \begin{align*}
        \Delta^C = & \left(\bigcap_{i=1}^{C+1}\{\placeholderdist\in\mathbb R^{C+1}: \placeholderdist_i\ge 0\}\right) \\
                   & \cap
        \left\{\placeholderdist\in\mathbb R^{C+1}:\sum_{i=1}^{C+1} \placeholderdist_i = 1\right\}.
    \end{align*}
    Each set
    \begin{align*}
        \{\placeholderdist\in\mathbb R^{C+1}: \placeholderdist_i\ge 0\}
    \end{align*}
    is closed, as it is the preimage of the closed interval
    $[0,\infty)$ under the continuous projection map
    $\placeholderdist\mapsto \placeholderdist_i$. Likewise,
    \begin{align*}
        \left\{
        \placeholderdist\in\mathbb R^{C+1}:
        \sum_{i=1}^{C+1} \placeholderdist_i = 1
        \right\}
    \end{align*}
    is closed, being the preimage of the closed set $\{1\}$
    under the continuous map
    $\placeholderdist\mapsto \sum_{i=1}^{C+1} \placeholderdist_i$.
    Since an intersection of closed sets is closed,
    $\Delta^C$ is closed. Finally, let $\placeholderdist\in\Delta^C$. Since $\placeholderdist_i\ge 0$ and
    \begin{align*}
        \sum_{i=1}^{C+1} \placeholderdist_i = 1,
    \end{align*}
    it follows that
    \(
    0 \le \placeholderdist_i \le 1,
    \;\text{for}\;
    i=1,\dots,C+1.
    \)
    Therefore,
    \begin{align*}
        \|\placeholderdist\|_2^2
        =
        \sum_{i=1}^{C+1} \placeholderdist_i^2
        \le
        \sum_{i=1}^{C+1} \placeholderdist_i
        =
        1,
    \end{align*}
    where the inequality follows from $0\le \placeholderdist_i\le 1$ implying
    $\placeholderdist_i^2\le \placeholderdist_i$. Consequently,
    \(
    \|\placeholderdist\|_2 \le 1,
    \)
    and thus $\Delta^C$ is bounded.
\end{proof}
Moreover, the following holds:
\begin{restatable}{lemma}{lemmaConcavity}
    \label{lemma:concavity}
    The function in \Cref{eq:reject_cross_entropy_internal} is concave in \(\PYXhat\) over the domain \(\Delta^{C}\).
\end{restatable}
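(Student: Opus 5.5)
The argument composes concave functions with affine maps. It then uses the fact that nonnegative weighted sums of concave functions are concave.

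\textbf{Step 1 (affine inner maps).} For each class $j\in\{1,\ldots,C\}$ define the linear map $g_j:\mathbb{R}^{C+1}\to\mathbb{R}$ by $g_j(\placeholderdist)=\costmult\,\placeholderdist_j+\placeholderdist_{C+1}$. On $\Delta^{C}$ we have $\costmult>0$ and all coordinates nonnegative, so $g_j(\placeholderdist)\ge 0$. Hence $g_j$ maps $\Delta^{C}$ into $[0,\infty)$.

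\textbf{Step 2 (concavity of each log term).} The function $\log$ is concave and nondecreasing on $(0,\infty)$, since its second derivative is $-1/t^2<0$. We extend it to $[0,\infty)$ by $\log 0=-\infty$, as an extended-value concave function. Composing a concave function with an affine map preserves concavity. Concretely, for $\placeholderdist,\placeholderdist'\in\Delta^C$ and $\gamma\in[0,1]$, linearity gives
\begin{align*}
    g_j(\gamma\placeholderdist+(1-\gamma)\placeholderdist') = \gamma g_j(\placeholderdist)+(1-\gamma)g_j(\placeholderdist').
\end{align*}
Concavity of $\log$ then yields
\begin{align*}
    \log g_j(\gamma\placeholderdist+(1-\gamma)\placeholderdist') \ge \gamma\log g_j(\placeholderdist)+(1-\gamma)\log g_j(\placeholderdist').
\end{align*}
Convexity of $\Delta^C$ (\Cref{lemma:simplex_properties}) guarantees that the convex combination stays in the domain.

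\textbf{Step 3 (nonnegative combination).} Since $p_j\ge 0$, multiply the inequality from Step 2 by $p_j$ and sum over $j=1,\ldots,C$. This shows that $\W{r}=\sum_j p_j\log g_j(\phat)$ is concave on $\Delta^{C}$.

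The only delicate point is the boundary, where some $g_j(\phat)=0$ and the value is $-\infty$. Two conventions settle it:
\begin{itemize}
    \item If $p_j=0$, the term is treated as $0\cdot\log 0=0$.
    \item Otherwise, the inequality holds trivially in the extended reals whenever its right-hand side is $-\infty$.
\end{itemize}
Alternatively, one can restrict to the relative interior, where everything is finite and smooth. There concavity can be double-checked by computing the Hessian
\begin{align*}
    -\sum_j \frac{p_j}{g_j(\phat)^2}\,a_j a_j^\top,
\end{align*}
where $a_j$ is the gradient of $g_j$, and noting that it is negative semidefinite. Concavity then extends to the closure by continuity of $\log$ into $[-\infty,\infty)$.
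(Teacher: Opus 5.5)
Your proof is correct and takes essentially the same approach as the paper. The paper shows that $-W_r$ is convex: each term $-\log(k\widehat{p}_j+\widehat{p}_{C+1})$ is a convex function composed with an affine map, using the extended-value convention at the boundary, and a nonnegative weighted sum of convex functions is convex. Your argument is the same one phrased through concavity of $\log$, and your explicit handling of the $0\cdot\log 0$ terms and the optional Hessian check add care that the paper's proof omits but does not need.
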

\begin{proof}
    To show that $\W{r}$ is concave, it suffices to show that $-\W{r}$ is convex. Observe that
    \begin{align*}
        -\W{r} = \sum_{j=1}^{C}p_j\left[-\log\!\left(\costmult\cdot\phat_j+\phat_{C+1}\right)\right].
    \end{align*}
    For each $j$, define
    \(
    g_j(\phat) = -\log\!\left(\costmult\cdot\phat_j+\phat_{C+1}\right).
    \)
    Since $\costmult>0$ and $\phat\in\Delta^C$, we have
    \(
    \costmult\cdot\phat_j+\phat_{C+1}\geq 0.
    \)
    The function $x\mapsto-\log x$ is convex\footnote{We adopt the extended-value convention
    \(
    -\log 0 := +\infty.
    \)
    Thus, under the above convention, $x\mapsto-\log x$ defines an
    extended-real-valued convex function on $[0,\infty)$. Moreover, the map
    \(
    \phat\mapsto\costmult\cdot\phat_j+\phat_{C+1}
    \)
    is affine. Hence, by preservation of convexity under affine
    precomposition, each
    \(
    g_j(\phat) = -\log\!\left(\costmult\cdot\phat_j+\phat_{C+1}\right)
    \)
    is convex on $\Delta^C$ in the extended-real sense.
    } on $(0,\infty)$, since
    \begin{align*}
        \frac{d^2}{dx^2}(-\log x)
        =
        \frac{1}{x^2}
        >
        0,
        \qquad x>0,
    \end{align*}
    and satisfies
    \(
    \lim_{x\rightarrow 0} -\log x = +\infty.
    \)
    Since $p_j\geq 0$
    for all $j$, the nonnegative weighted sum
    \begin{align*}
        -\W{r}
        =
        \sum_{j=1}^{C} p_j g_j
    \end{align*}
    is convex. Therefore, $\W{r}$ is concave on $\Delta^C$ in the
    extended-real sense, with the corresponding convention $\log 0=-\infty$.
\end{proof}
From \Cref{lemma:simplex_properties,lemma:concavity}, the following can be concluded:
\begin{restatable}{remark}{remarkadmissible}
    \label{remark:admissible}
    The optimization problem of maximizing \(\W{r}\) over the probability simplex \(\Delta^{C}\) involves the maximization over a convex, closed, and bounded set of a concave function. Hence, the problem is a \gls*{cmp} problem, and for the continuity of \(\W{r}\) over the compact set \(\Delta^{C}\), it admits a global maximum.
\end{restatable}
Due to the nature of the problem analyzed, we can build the following \gls*{cmp} problem, which is equivalent to the maximization of \(\W{r}\) over the probability simplex \(\Delta^{C}\):
\begin{align}
    \label{eq:cmp_problem}
    \left\{
    \begin{aligned}
        \text{maximize}\quad   & \W{r}                              \\
        \text{subject to}\quad & \sum_{i=1}^{C+1} \phat_i = 1,      \\
                               & \phat_i \ge 0, \quad i=1,\dots,C+1
    \end{aligned}
    \right.
\end{align}
Thus, the following holds:
\begin{restatable}{proposition}{propEquivalence}
    \label{prop:equivalence}
    The formulation in \Cref{eq:cmp_problem} encompasses the standard training with \Cref{eq:cross_entropy} as a special case when \(\phat_{C+1}=0\) (cf. \cite{liu2019deep}), as it reduces to the minimization of the Kullback-Leibler divergence between the true and predicted distributions, which is equivalent to the cross-entropy loss.
\end{restatable}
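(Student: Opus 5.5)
Setting \(\phat_{C+1}=0\) in the objective of \Cref{eq:cmp_problem} gives a reduced problem, and I will show that it coincides with the minimization of \(\mathrm{KL}(p\,\|\,\phat)\), hence with cross-entropy training. With \(\phat_{C+1}=0\), the constraints \Cref{eq:reject_cross_entropy_sum,eq:reject_cross_entropy_nonneg} say exactly that \((\phat_1,\ldots,\phat_C)\) lies in the \(C\)-class simplex, and the objective becomes
\begin{align*}
    \W{r}\big|_{\phat_{C+1}=0} = \sum_{j=1}^{C} p_j\log\left(\costmult\,\phat_j\right) = \log\costmult + \W,
\end{align*}
where I use \(\sum_j p_j = 1\). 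The additive constant \(\log\costmult\) does not depend on \(\phat\), so the reduced problem has the same maximizers as \Cref{eq:cross_entropy_internal}. The cost multiplier therefore plays no role once reservation is switched off.

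Next I relate \(\W\) to the Kullback--Leibler divergence through the standard identity
\begin{align*}
    -\W = H(p) + \mathrm{KL}\left(p\,\|\,\phat\right), \qquad H(p) = -\sum_{j=1}^{C} p_j\log p_j .
\end{align*}
I will adopt the conventions \(0\log 0 = 0\) and \(\log 0 = -\infty\), consistent with \Cref{lemma:concavity}. The entropy \(H(p)\) is fixed by the true posterior, so maximizing \(\W\) is equivalent to minimizing \(\mathrm{KL}(p\,\|\,\phat)\).

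Then I average over samples. Summing \(-\W\) over \(\x_i\in\calD_n\) and dividing by \(n\) gives exactly \(\mathcal{L}_{\mathrm{CE}}(\calD_n)\) in \Cref{eq:cross_entropy}. The maximization of the averaged \(\W{r}\) with \(\phat_{C+1}=0\) therefore coincides with cross-entropy minimization, up to the constants \(\log\costmult\) and the average entropy. For one-hot labels, \(H(p)=0\) and the KL divergence equals the cross-entropy exactly, which recovers the footnote's formula.

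As a consistency check, Gibbs' inequality \(\mathrm{KL}\ge 0\), with equality iff \(\phat=p\), gives the unique reduced optimum \(\phat_j=p_j\). This agrees with the regime \(\costmult>C\) of \Cref{theorem:threshold}, where \(\phat_{C+1}^\star=0\) is optimal.

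The main subtlety is the handling of zero entries. I must check that coordinates with \(p_j=0\) contribute nothing, and that \(\phat_j=0\) with \(p_j>0\) correctly yields the value \(-\infty\), respectively \(+\infty\) for the KL divergence, so that the equivalence holds on all of the closed simplex and not only on its interior. I would handle this by splitting the index set into \(\{j: p_j>0\}\) and its complement.
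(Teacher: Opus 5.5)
Your proposal is correct and follows essentially the same route as the paper: substitute \(\phat_{C+1}=0\), write the objective as \(\log\costmult - H(p) - D_{\mathrm{KL}}(p\,\|\,\phat)\), and drop the \(\phat\)-independent constants. Your explicit averaging over \(\calD_n\) to recover \(\mathcal{L}_{\mathrm{CE}}(\calD_n)\) and your handling of zero coordinates are refinements the paper leaves implicit; also, stating the standard identity directly avoids the paper's garbled intermediate step, where \(\log(\costmult\,\phat_j\, p_j/\phat_j)\) should read \(\log(\costmult\, p_j\,\phat_j/p_j)\).
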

\begin{proof}

    Let $\phat\in\Delta^C$ with $\phat_{C+1}=0$. Then
    \begin{align*}
        \W{r}
         & =
        \sum_{j=1}^{C}p_j\log\!\left(\costmult\cdot\phat_j+\phat_{C+1}\right)             \\
         & =
        \sum_{j=1}^{C}p_j\log\!\left(\costmult\cdot\phat_j\cdot\frac{p_j}{\phat_j}\right) \\
         & =
        \log\!\left(\costmult\right)
        -
        H(p)
        -
        D_{\text{KL}}(p\| \phat),
    \end{align*}
    where $H(p)$ is the Shannon entropy of $p$ and $D_{\text{KL}}(p\| \phat)$ is the Kullback-Leibler divergence between $p$ and $\phat$. Since $\log\!\left(\costmult\right)$ and $H(p)$ are constants with respect to $\phat$, maximizing $\W{r}$ is equivalent to minimizing $D_{\text{KL}}(p\| \phat)$, which is equivalent to minimizing the cross-entropy loss.
\end{proof}
\subsection{Proof of \Cref{theorem:kkt}}
\label{sec:proof_kkt_conditions}
\theoremKKT*
\begin{proof}
    The optimization problem in Eq.~\eqref{eq:cmp_problem} can be analyzed using the \gls*{kkt} conditions. The corresponding Lagrangian is given by
    \begin{align*}
        \mathcal{L}(\phat, \lambda, \mu) & = \W{r} \\ & +\lambda\left(1-\sum_{i=1}^{C+1}\phat_i\right) \\ & +\sum_{i=1}^{C+1}\mu_i\phat_i,
    \end{align*}
    where $\lambda\in\mathbb{R}$ is the Lagrange multiplier associated with the equality constraint and $\mu_i\ge0$ are the multipliers corresponding to the non-negativity constraints.
    The \gls*{kkt} conditions are
    \begin{align*}
        \left\{
        \begin{aligned}
             & \frac{\partial \mathcal{L}}{\partial \phat_i}
            = \frac{\partial \W{r}}{\partial \phat_i}
            - \lambda + \mu_i = 0,
             &                                               & i=1,\ldots,C+1, \\
             & \sum_{i=1}^{C+1} \phat_i = 1,                                   \\
             & \phat_i                   \ge 0,
             &                                               & i=1,\ldots,C+1, \\
             & \mu_i                       \ge 0,
             &                                               & i=1,\ldots,C+1, \\
             & \mu_i \phat_i             = 0,
             &                                               & i=1,\ldots,C+1.
        \end{aligned}
        \right.
    \end{align*}
    For the objective in \Cref{eq:reject_cross_entropy_internal}, the stationarity conditions yield
    \begin{align*}
        \frac{\partial\mathcal{L}}{\partial\phat_i} & = \frac{\costmult\cdot p_i}{\costmult\cdot\phat_i+\phat_{C+1}}-\lambda+\mu_i = 0,
    \end{align*}
    for \(i = 1, \ldots, C\), and
    \begin{align*}
        \frac{\partial\mathcal{L}}{\partial\phat_{C+1}} & = \sum_{i=1}^{C}\frac{p_i}{\costmult\cdot\phat_i+\phat_{C+1}}-\lambda+\mu_{C+1} = 0.
    \end{align*}
    Since $\phat_{C+1}^\star>0$, complementary slackness implies $\mu_{C+1}=0$. Evaluating the stationarity condition at $\hat{\mathbf p}^\star$ therefore gives
    \begin{align*}
        \lambda = \sum_{i=1}^{C}\frac{p_i}{\costmult\cdot\phat_i^\star+\phat_{C+1}^\star}.
    \end{align*}
    Since $\phat_{C+1}^\star>0$, every denominator is strictly positive. Moreover, since $\mathbf p$ is a probability vector, $p_i\ge0$ for all $i$ and at least one $p_i>0$. Hence,
    \begin{align*}
        \lambda>0.
    \end{align*}

    The complementary slackness condition naturally leads to two cases for each class output.

    \paragraph{Case 1: $\phat_i^\star>0$}
    In this case, complementary slackness implies $\mu_i=0$, yielding
    \begin{align*}
        \frac{\costmult\cdot p_i}{\costmult\cdot\phat_i^\star+\phat_{C+1}^\star} = \lambda,
    \end{align*}
    or equivalently,
    \begin{align*}
        \phat_i^\star = \frac{p_i}{\lambda}-\frac{\phat_{C+1}^\star}{\costmult}.
    \end{align*}
    Notice how the requirement $\phat_i^\star>0$ implies that
    \begin{align*}
        p_i>\lambda\frac{\phat_{C+1}^\star}{\costmult},
    \end{align*}
    which means that a class becomes inactive whenever its probability falls below the threshold \(\lambda\frac{\phat_{C+1}^\star}{\costmult}\), where \(\frac{\phat_{C+1}^\star}{\costmult}\) is the ratio of the reserve output to the cost multiplier associated with betting on the class prediction.

    \paragraph{Case 2: $\phat_i^\star=0$}
    In this case, complementary slackness only implies that $\mu_i\ge0$. The stationarity condition becomes
    \begin{align*}
        \frac{\costmult\,p_i}{\phat_{C+1}^\star}-\lambda+\mu_i=0,
    \end{align*}
    where $\phat_{C+1}^\star>0$ by assumption. Equivalently,
    \begin{align*}
        \mu_i = \lambda-\frac{\costmult\,p_i}{\phat_{C+1}^\star}.
    \end{align*}
    Since $\mu_i\ge0$, it follows that
    \begin{align*}
        \frac{\costmult\,p_i}{\phat_{C+1}^\star}\le\lambda,
    \end{align*}
    or equivalently,
    \begin{align*}
        p_i\le\lambda\frac{\phat_{C+1}^\star}{\costmult}.
    \end{align*}
    Since $\lambda>0$, this is equivalent to
    \begin{align*}
        \frac{p_i}{\lambda}-\frac{\phat_{C+1}^\star}{\costmult}\le0.
    \end{align*}

    Combining the two cases, for every $i\in\{1,\ldots,C\}$ we obtain
    \begin{align*}
        \phat_i^\star = \max\left\{\frac{p_i}{\lambda}-\frac{\phat_{C+1}^\star}{\costmult}, \,0\right\},
    \end{align*}
    which proves the result.
\end{proof}

\subsection{Proof of \Cref{theorem:threshold}}
\label{sec:proof_threshold}
\theoremThreshold*
\begin{proof}
    We consider separately the regimes \(\costmult<1\), \(\costmult>C\), and the boundary cases \(\costmult=1\) and \(\costmult=C\).

    \paragraph{Case 1: \(\costmult<1\).}
    For any feasible \(\phat\) and every \(j\in\{1,\ldots,C\}\),
    \begin{align*}
        \costmult\phat_j+\phat_{C+1}\le\phat_j+\phat_{C+1}\le1,
    \end{align*}
    where the second inequality follows from \(\sum_{j=1}^{C+1}\phat_j=1\). Hence,
    \begin{align*}
        \log\left(\costmult\phat_j+\phat_{C+1}\right)\le0,
    \end{align*}
    and therefore
    \begin{align*}
        \W{r}\le0.
    \end{align*}
    The full-reservation solution
    \begin{align*}
        (\phat_1,\ldots,\phat_C,\phat_{C+1})=(0,\ldots,0,1)
    \end{align*}
    attains \(\W{r}=0\), and is therefore optimal. Thus, for \(\costmult<1\), full reservation is optimal.

    \paragraph{Case 2: \(\costmult>C\).}
    Suppose, for contradiction, that an optimal solution satisfies \(\phat_{C+1}^\star>0\). Let
    \begin{align*}
        S=\{j:\phat_j^\star>0\}, \qquad a=|S|, \qquad P_{\bar S}=\sum_{j\notin S}p_j.
    \end{align*}
    Since \(\phat_{C+1}^\star>0\), complementary slackness gives \(\mu_{C+1}=0\). Moreover, for every \(j\in S\), \Cref{eq:kkt_conditions_solution} implies
    \begin{align*}
        \costmult\phat_j^\star+\phat_{C+1}^\star=\frac{\costmult p_j}{\lambda}.
    \end{align*}
    Hence, the stationarity condition for the reservation coordinate becomes
    \begin{align*}
        \frac{a\lambda}{\costmult}+\frac{P_{\bar S}}{\phat_{C+1}^\star}=\lambda.
    \end{align*}

    If \(P_{\bar S}>0\), then
    \begin{align*}
        P_{\bar S}=\lambda\left(1-\frac{a}{\costmult}\right)\phat_{C+1}^\star.
    \end{align*}
    Summing \Cref{eq:kkt_conditions_solution} over \(j\in S\) and using \(\sum_{j\in S}\phat_j^\star=1-\phat_{C+1}^\star\) gives
    \begin{align*} \frac{1-P_{\bar S}}{\lambda}-\frac{a\phat_{C+1}^\star}{\costmult}=1-\phat_{C+1}^\star. \end{align*}
    Substituting the preceding expression for \(P_{\bar S}\) yields
    \(\lambda=1 \).
    Therefore,
    \begin{align*}
        P_{\bar S}=(\costmult-a)\frac{\phat_{C+1}^\star}{\costmult}.
    \end{align*}
    On the other hand, for every inactive coordinate \(j\notin S\), \Cref{eq:kkt_conditions_solution} implies
    \begin{align*}
        p_j\le\frac{\phat_{C+1}^\star}{\costmult}.
    \end{align*}
    Summing over the \(C-a\) inactive coordinates gives
    \begin{align*}
        P_{\bar S}\le(C-a)\frac{\phat_{C+1}^\star}{\costmult}.
    \end{align*}
    Combining the last two relations and using \(\phat_{C+1}^\star>0\) yields
    \begin{align*}
        \costmult-a\le C-a,
    \end{align*}
    and hence
    \begin{align*}
        \costmult\le C,
    \end{align*}
    contradicting the assumption \(\costmult>C\).

    It remains to consider \(P_{\bar S}=0\). In this case, the reservation stationarity condition reduces to
    \begin{align*} \frac{a\lambda}{\costmult}=\lambda. \end{align*}
    Since \(\lambda>0\), this implies
    \begin{align*} a=\costmult. \end{align*}
    However, \(a\le C<\costmult\), which is impossible. Consequently,
    \begin{align*} \phat_{C+1}^\star=0. \end{align*}

    With no reservation, the objective becomes
    \begin{align*}
        \W{r}=\sum_{j=1}^{C}p_j\log(\costmult\phat_j)=\log(\costmult)+\sum_{j=1}^{C}p_j\log\phat_j.
    \end{align*}
    Thus, maximizing \(\W{r}\) is equivalent to maximizing \(\sum_{j=1}^{C}p_j\log\phat_j\) over the probability simplex. By Gibbs' inequality, the maximum is attained at
    \begin{align*}
        \phat_j^\star=p_j, \qquad j=1,\ldots,C.
    \end{align*}
    Hence, for \(\costmult>C\), every optimal solution has no reservation and predicts according to the posterior distribution.

    \paragraph{Case 3: \(\costmult=1\).}
    For every feasible \(\phat\) and every \(j\in\{1,\ldots,C\}\),
    \begin{align*}
        \phat_j+\phat_{C+1}\le1.
    \end{align*}
    Therefore,
    \begin{align*}
        \W{r}\le0,
    \end{align*}
    and full reservation attains \(\W{r}=0\). Hence, full reservation is optimal. However, the optimizer need not be unique. For example, suppose the posterior is concentrated on a single class \(i\), so that
    \begin{align*}
        p_i=1, \qquad p_j=0 \quad \text{for } j\neq i.
    \end{align*}
    Then every feasible solution satisfying
    \begin{align*}
        \phat_i+\phat_{C+1}=1, \qquad \phat_j=0 \quad \text{for } j\neq i,
    \end{align*}
    achieves
    \begin{align*}
        \W{r}=\log(\phat_i+\phat_{C+1})=0.
    \end{align*}
    Thus, distinct prediction--reservation allocations may be equally optimal when \(\costmult=1\).

    \paragraph{Case 4: \(\costmult=C\).}
    The optimizer need not be unique at this boundary either. Consider the uniform posterior
    \begin{align*}
        p_j=\frac{1}{C}, \qquad j=1,\ldots,C.
    \end{align*}
    For any \(\alpha\in[0,1]\), define
    \begin{align*}
        \phat_j^{(\alpha)}=\frac{1-\alpha}{C}, \qquad j=1,\ldots,C, \qquad \phat_{C+1}^{(\alpha)}=\alpha.
    \end{align*}
    Then
    \begin{align*}
        C\phat_j^{(\alpha)}+\phat_{C+1}^{(\alpha)}=1, \qquad j=1,\ldots,C,
    \end{align*}
    and consequently
    \begin{align*}
        \W{r}=\sum_{j=1}^{C}\frac{1}{C}\log(1)=0.
    \end{align*}
    To see that this value is optimal, Jensen's inequality gives
    \begin{align*}
        \W{r} & \le                                                                          \\
              & \log\left(\frac{1}{C}\sum_{j=1}^{C}\left(C\phat_j+\phat_{C+1}\right)\right)= \\
              & \log(1)=0.
    \end{align*}
    Thus, every member of this family for different values of \(\alpha\) is optimal, showing that distinct prediction--reservation allocations may also be equally optimal when \(\costmult=C\).

    Therefore, \(\costmult<1\) corresponds to the full-reservation regime, while \(\costmult>C\) corresponds to the no-reservation prediction regime. The boundary values \(\costmult=1\) and \(\costmult=C\) may admit multiple optimal prediction--reservation allocations. The cost-multiplier mixed-regime, which is the most interesting case, is \(1<\costmult<C\).
\end{proof}

\subsection{Additional numerical results}
\label{subsec:additional_numerical_results}
In this section, we present additional numerical results, which complement the results already presented in the main body of the paper.
\begin{figure}[!h]
    \includegraphics[width=.48\textwidth]{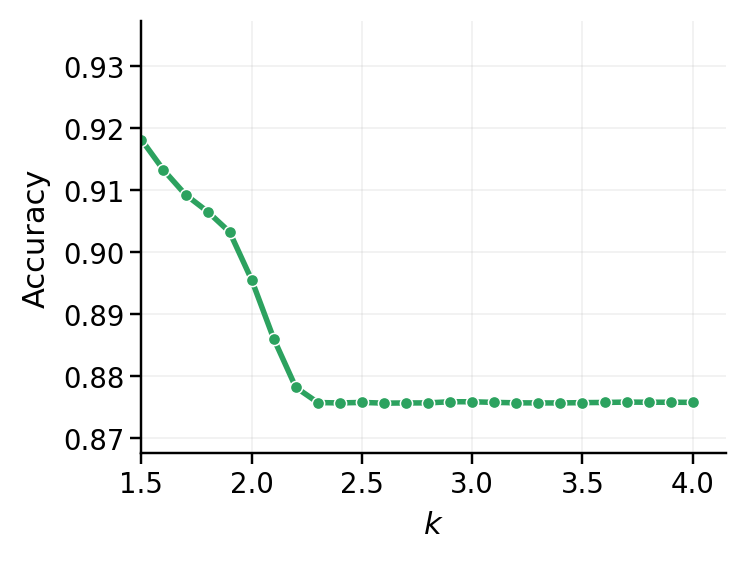}
    \caption{Synthetic benchmark data -- Ablation study over the variable \(k\) for the higher posterior overlap case for the benchmark in \Cref{subsec:synthetic-benchmark}.}
    \label{fig:ablation-k-synthetic}
\end{figure}
\subsection{Benchmark generation}
\label{subsec:benchmark_gen}
\input{tex/tex_files/sections/synthetic_data_genereation.tex}

%% file: tex/tex_files/sections/synthetic_data_genereation.tex
\captionof{algorithm}{Synthetic Data Generation with Target Bayes Error}
\label{alg:generator}

\begin{algorithmic}[1]

\Require Number of samples $N$; feature means $\bm{\mu}$ and scales
$\bm{\sigma}$; correlation matrix $\mathbf{R}$; causal defect map
$\mathcal{E}$; class priors $\bm{\pi}$; target Bayes error
$\varepsilon^\star$
\Ensure Synthetic labeled dataset $\mathcal{D}$

\State Sample correlated process noise
\[
    \mathbf{Z} \sim \mathcal{N}(\mathbf{0},\mathbf{R})
\]

\State Generate process features
\[
    \mathbf{X}
    = \bm{\mu} + \bm{\sigma}\odot\mathbf{Z}
\]

\State Normalize each feature relative to its specification limits
to obtain deviations $\bm{\Delta}$

\State Compute class scores $s_k(\bm{\Delta};\mathcal{E})$
from the causal defect map

\State Initialize gain interval $[g_{\min},g_{\max}]$

\Repeat
    \State $g \gets (g_{\min}+g_{\max})/2$

    \State Calibrate class offsets $b_k$ so that the mean posterior
    probabilities match the desired priors $\pi_k$

    \State Compute the class posterior
    \[
        p_{ik}
        =
        P(Y=k\mid\mathbf{x}_i)
        =
        \operatorname{softmax}_k
        \left(g\,s_k(\bm{\Delta}_i)+b_k\right)
    \]

    \State Compute the resulting Bayes error
    \[
        \varepsilon_B
        =
        \frac{1}{N}
        \sum_{i=1}^{N}
        \left(1-\max_k p_{ik}\right)
    \]

    \If{$\varepsilon_B > \varepsilon^\star$}
        \State $g_{\min} \gets g$
    \Else
        \State $g_{\max} \gets g$
    \EndIf
\Until{$\varepsilon_B \approx \varepsilon^\star$}

\For{$i=1,\ldots,N$}
    \State Sample
    $y_i \sim \operatorname{Categorical}(p_{i1},\ldots,p_{iK})$
\EndFor

\State $\mathcal{D}\gets\{(\mathbf{x}_i,y_i)\}_{i=1}^{N}$

\State \Return $\mathcal{D}$

\end{algorithmic}

%% file: tex/bibliography.bib
@book{boyd2004convex,
  title     = {Convex Optimization},
  author    = {Boyd, Stephen and Vandenberghe, Lieven},
  year      = {2004},
  publisher = {Cambridge University Press},
  address   = {Cambridge, UK},
  isbn      = {978-0521833783}
}

@article{chow1970optimum,
  author   = {Chow, C.},
  journal  = {IEEE Transactions on Information Theory},
  title    = {On optimum recognition error and reject tradeoff},
  year     = {1970},
  volume   = {16},
  number   = {1},
  pages    = {41-46},
  doi      = {10.1109/TIT.1970.1054406}
}

@inproceedings{liu2019deep,
  author    = {Liu, Ziyin and Wang, Zhikang and Liang, Paul Pu and Salakhutdinov, Russ R and Morency, Louis-Philippe and Ueda, Masahito},
  booktitle = {Advances in Neural Information Processing Systems},
  editor    = {H. Wallach and H. Larochelle and A. Beygelzimer and F. d\textquotesingle Alch\'{e}-Buc and E. Fox and R. Garnett},
  pages     = {},
  publisher = {Curran Associates, Inc.},
  title     = {Deep Gamblers: Learning to Abstain with Portfolio Theory},
  volume    = {32},
  year      = {2019}
}

@article{markowitz1952portfolio,
  issn      = {00221082, 15406261},
  author    = {Harry Markowitz},
  journal   = {The Journal of Finance},
  number    = {1},
  pages     = {77--91},
  publisher = {[American Finance Association, Wiley]},
  title     = {Portfolio Selection},
  urldate   = {2026-06-23},
  volume    = {7},
  year      = {1952}
}

@article{shenoy2026uncertainty,
  author   = {Shenoy, Mukund and Ameri, Farhad},
  journal  = {IEEE Transactions on Semiconductor Manufacturing},
  title    = {Uncertainty-Aware Neurosymbolic Root-Cause Analysis for Surface-Mount Assembly},
  year     = {2026},
  volume   = {},
  number   = {},
  pages    = {1-1},
  doi      = {10.1109/TSM.2026.3673999}
}

@article{lee2017convolutional,
  author  = {Lee, K. B. and Cheon, S. and Kim, C. O.},
  title   = {A Convolutional Neural Network for Fault Classification and Diagnosis in Semiconductor Manufacturing Processes},
  journal = {IEEE Transactions on Semiconductor Manufacturing},
  volume  = {30},
  number  = {2},
  pages   = {135--142},
  year    = {2017},
  doi     = {10.1109/TSM.2017.2676245}
}

@article{khader2018stencil,
  author  = {Khader, N. and Yoon, S. W.},
  title   = {Stencil Printing Process Optimization to Control Solder Paste Volume Transfer Efficiency},
  journal = {IEEE Transactions on Components, Packaging and Manufacturing Technology},
  volume  = {8},
  number  = {9},
  pages   = {1686--1694},
  year    = {2018},
  doi     = {10.1109/TCPMT.2018.2830391}
}

@article{lu2019realtime,
  author  = {Lu, H. and Wang, H. and Yoon, S. W. and Won, D.},
  title   = {Real-Time Stencil Printing Optimization Using a Hybrid Multi-Layer Online Sequential Extreme Learning and Evolutionary Search Approach},
  journal = {IEEE Transactions on Components, Packaging and Manufacturing Technology},
  volume  = {9},
  number  = {12},
  pages   = {2490--2501},
  year    = {2019},
  doi     = {10.1109/TCPMT.2019.2934487}
}

@article{cao2024xscan,
  author  = {Cao, N. and Farrag, A. and Won, D. and Yoon, S. W.},
  title   = {{XSCAN}: Explainable Solder Joint Defect Probability Prediction through Solder Paste Printing Status with Imbalanced Data},
  journal = {Journal of Manufacturing Systems},
  volume  = {77},
  pages   = {212--227},
  year    = {2024},
  doi     = {10.1016/j.jmsy.2024.09.009}
}

@inproceedings{kim2024deep,
  author    = {Kim, Heeyeon and Kim, Ki-Hyung},
  booktitle = {2024 International Conference on Artificial Intelligence in Information and Communication (ICAIIC)},
  title     = {Deep Learning-Based SBOM Defect Detection for Medical Devices},
  year      = {2024},
  volume    = {},
  number    = {},
  pages     = {047-051},
  doi       = {10.1109/ICAIIC60209.2024.10463483}
}

@misc{huang2026unsuperviseddefectdetectionsurgical,
  title         = {Unsupervised Defect Detection for Surgical Instruments},
  author        = {Joseph Huang and Yichi Zhang and Jingxi Yu and Wei Chen and Seunghyun Hwang and Qiang Qiu and Amy R. Reibman and Edward J. Delp and Fengqing Zhu},
  year          = {2026},
  eprint        = {2509.21561},
  archiveprefix = {arXiv},
  primaryclass  = {cs.CV},
  url           = {https://arxiv.org/abs/2509.21561}
}

@article{bonatti2025expert,
  author  = {Bonatti, A. F. and Meringolo, F. D. and Tubertini, I. and Lavecchia, C. E. and Favaro, A. and Vozzi, G. and De Maria, C.},
  title   = {Expert System for Online Defect Detection in Medical Devices Produced by Electron Beam Melting Using Layer-by-Layer Optical Images},
  journal = {3D Printing and Additive Manufacturing},
  year    = {2025},
  volume  = {12},
  number  = {1},
  pages   = {36--47},
  doi     = {10.1089/3dp.2023.0222},
  pmid    = {40151675},
  pmcid   = {PMC11937765}
}

@online{protookn,
  author  = {{Proto-OKN Collaboration}},
  title   = {{Proto-OKN: Prototype Open Knowledge Network}},
  url     = {https://www.proto-okn.net/},
  urldate = {2026-08-18}
}

@online{oknregistry,
  author  = {{Open Knowledge Network}},
  title   = {{Open Knowledge Network (OKN) Graph Registry}},
  url     = {https://registry.okn.us/},
  urldate = {2026-08-18}
}

@misc{fda_maude,
  author       = {{U.S. Food and Drug Administration}},
  title        = {{Manufacturer and User Facility Device Experience (MAUDE) Database}},
  howpublished = {\url{https://www.accessdata.fda.gov/scripts/cdrh/cfdocs/cfmaude/search.cfm}},
  note         = {Accessed: 2026-08-18},
  year         = {2026}
}

@misc{fda_510k,
  author       = {{U.S. Food and Drug Administration}},
  title        = {{510(k) Premarket Notification Database}},
  howpublished = {\url{https://www.accessdata.fda.gov/scripts/cdrh/cfdocs/cfpmn/pmn.cfm}},
  note         = {Accessed: 2026-08-18},
  year         = {2026}
}

@article{Chow1970TIT,
  author    = {C. K. Chow},
  title     = {On optimum recognition error and reject tradeoff},
  journal   = {{IEEE} Trans. Inf. Theory},
  volume    = {16},
  number    = {1},
  pages     = {41--46},
  year      = {1970},
  url       = {https://doi.org/10.1109/TIT.1970.1054406},
  doi       = {10.1109/TIT.1970.1054406},
  bibsource = {dblp computer science bibliography, https://dblp.org}
}

@inproceedings{CorbiereTBCP2019NeurIPS,
  author    = {Charles Corbi{\`{e}}re and
               Nicolas Thome and
               Avner Bar{-}Hen and
               Matthieu Cord and
               Patrick P{\'{e}}rez},
  editor    = {Hanna M. Wallach and
               Hugo Larochelle and
               Alina Beygelzimer and
               Florence d'Alch{\'{e}}{-}Buc and
               Emily B. Fox and
               Roman Garnett},
  title     = {Addressing Failure Prediction by Learning Model Confidence},
  booktitle = {Advances in Neural Information Processing Systems 32: Annual Conference
               on Neural Information Processing Systems 2019, NeurIPS 2019, December
               8-14, 2019, Vancouver, BC, Canada},
  pages     = {2898--2909},
  year      = {2019},
  url       = {https://proceedings.neurips.cc/paper/2019/hash/757f843a169cc678064d9530d12a1881-Abstract.html},
  bibsource = {dblp computer science bibliography, https://dblp.org}
}

@article{FengAHA2022CORR,
  author     = {Leo Feng and
                Mohamed Osama Ahmed and
                Hossein Hajimirsadeghi and
                Amir H. Abdi},
  title      = {Stop Overcomplicating Selective Classification: Use Max-Logit},
  journal    = {CoRR},
  volume     = {abs/2206.09034},
  year       = {2022},
  url        = {https://doi.org/10.48550/arXiv.2206.09034},
  doi        = {10.48550/arXiv.2206.09034},
  eprinttype = {arXiv},
  eprint     = {2206.09034},
  bibsource  = {dblp computer science bibliography, https://dblp.org}
}

@article{FischJB2022CORR,
  author     = {Adam Fisch and
                Tommi Jaakkola and
                Regina Barzilay},
  title      = {Calibrated Selective Classification},
  journal    = {CoRR},
  volume     = {abs/2208.12084},
  year       = {2022},
  url        = {https://doi.org/10.48550/arXiv.2208.12084},
  doi        = {10.48550/arXiv.2208.12084},
  eprinttype = {arXiv},
  eprint     = {2208.12084},
  bibsource  = {dblp computer science bibliography, https://dblp.org}
}

@article{Flores1958TEC,
  author    = {Ivan Flores},
  title     = {An Optimum Character Recognition System Using Decision Functions},
  journal   = {{IRE} Trans. Electron. Comput.},
  volume    = {7},
  number    = {2},
  pages     = {180},
  year      = {1958},
  url       = {https://doi.org/10.1109/TEC.1958.5222530},
  doi       = {10.1109/TEC.1958.5222530},
  bibsource = {dblp computer science bibliography, https://dblp.org}
}

@inproceedings{GeifmanE2019ICML,
  author    = {Yonatan Geifman and
               Ran El{-}Yaniv},
  editor    = {Kamalika Chaudhuri and
               Ruslan Salakhutdinov},
  title     = {SelectiveNet: {A} Deep Neural Network with an Integrated Reject Option},
  booktitle = {Proceedings of the 36th International Conference on Machine Learning,
               {ICML} 2019, 9-15 June 2019, Long Beach, California, {USA}},
  series    = {Proceedings of Machine Learning Research},
  volume    = {97},
  pages     = {2151--2159},
  publisher = {{PMLR}},
  year      = {2019},
  url       = {http://proceedings.mlr.press/v97/geifman19a.html},
  bibsource = {dblp computer science bibliography, https://dblp.org}
}

@inproceedings{GraneseRGPP2021NeurIPS,
  author    = {Federica Granese and
               Marco Romanelli and
               Daniele Gorla and
               Catuscia Palamidessi and
               Pablo Piantanida},
  editor    = {Marc'Aurelio Ranzato and
               Alina Beygelzimer and
               Yann N. Dauphin and
               Percy Liang and
               Jennifer Wortman Vaughan},
  title     = {{DOCTOR:} {A} Simple Method for Detecting Misclassification Errors},
  booktitle = {Advances in Neural Information Processing Systems 34: Annual Conference
               on Neural Information Processing Systems 2021, NeurIPS 2021, December
               6-14, 2021, virtual},
  pages     = {5669--5681},
  year      = {2021},
  url       = {https://proceedings.neurips.cc/paper/2021/hash/2cb6b10338a7fc4117a80da24b582060-Abstract.html},
  bibsource = {dblp computer science bibliography, https://dblp.org}
}

@inproceedings{GuoPSW2017ICML,
  author    = {Chuan Guo and
               Geoff Pleiss and
               Yu Sun and
               Kilian Q. Weinberger},
  editor    = {Doina Precup and
               Yee Whye Teh},
  title     = {On Calibration of Modern Neural Networks},
  booktitle = {Proceedings of the 34th International Conference on Machine Learning,
               {ICML} 2017, Sydney, NSW, Australia, 6-11 August 2017},
  series    = {Proceedings of Machine Learning Research},
  volume    = {70},
  pages     = {1321--1330},
  publisher = {{PMLR}},
  year      = {2017},
  url       = {http://proceedings.mlr.press/v70/guo17a.html},
  bibsource = {dblp computer science bibliography, https://dblp.org}
}

@inproceedings{HuangZZ2020NeurIPS,
  author    = {Lang Huang and
               Chao Zhang and
               Hongyang Zhang},
  editor    = {Hugo Larochelle and
               Marc'Aurelio Ranzato and
               Raia Hadsell and
               Maria{-}Florina Balcan and
               Hsuan{-}Tien Lin},
  title     = {Self-Adaptive Training: beyond Empirical Risk Minimization},
  booktitle = {Advances in Neural Information Processing Systems 33: Annual Conference
               on Neural Information Processing Systems 2020, NeurIPS 2020, December
               6-12, 2020, virtual},
  year      = {2020},
  url       = {https://proceedings.neurips.cc/paper/2020/hash/e0ab531ec312161511493b002f9be2ee-Abstract.html},
  bibsource = {dblp computer science bibliography, https://dblp.org}
}

@inproceedings{PudilNBK1992ICPR,
  author    = {Pavel Pudil and
               Jana Novovicov{\'{a}} and
               Svatopluk Bl{\'{a}}ha and
               Josef Kittler},
  title     = {Multistage pattern recognition with reject option},
  booktitle = {11th {IAPR} International Conference on Pattern Recognition, {ICPR}
               1992. Conference {B:} Pattern Recognition Methodology and Systems,
               The Hague, Netherlands, August 30-September 3, 1992},
  pages     = {92--95},
  publisher = {{IEEE}},
  year      = {1992},
  url       = {https://doi.org/10.1109/ICPR.1992.201729},
  doi       = {10.1109/ICPR.1992.201729},
  bibsource = {dblp computer science bibliography, https://dblp.org}
}

@inproceedings{DBLP:conf/cvpr/Mukhoti0ATG23,
  author    = {Jishnu Mukhoti and
               Andreas Kirsch and
               Joost van Amersfoort and
               Philip H. S. Torr and
               Yarin Gal},
  title     = {Deep Deterministic Uncertainty: {A} New Simple Baseline},
  booktitle = {{IEEE/CVF} Conference on Computer Vision and Pattern Recognition,
               {CVPR} 2023, Vancouver, BC, Canada, June 17-24, 2023},
  pages     = {24384--24394},
  publisher = {{IEEE}},
  year      = {2023},
  url       = {https://doi.org/10.1109/CVPR52729.2023.02336},
  doi       = {10.1109/CVPR52729.2023.02336},
  bibsource = {dblp computer science bibliography, https://dblp.org}
}

@inproceedings{paszke2019pytorch,
  title     = {{PyTorch}: An Imperative Style, High-Performance Deep Learning Library},
  author    = {Paszke, Adam and Gross, Sam and Massa, Francisco and Lerer, Adam and Bradbury, James and Chanan, Gregory and Killeen, Trevor and Lin, Zeming and Gimelshein, Natalia and Antiga, Luca and Desmaison, Alban and K{\"o}pf, Andreas and Yang, Edward and DeVito, Zachary and Raison, Martin and Tejani, Alykhan and Chilamkurthy, Sasank and Steiner, Benoit and Fang, Lu and Bai, Junjie and Chintala, Soumith},
  booktitle = {Advances in Neural Information Processing Systems},
  volume    = {32},
  year      = {2019}
}

@article{hullermeier2021aleatoric,
  author  = {H{\"u}llermeier, Eyke and Waegeman, Willem},
  title   = {Aleatoric and epistemic uncertainty in machine learning: an introduction to concepts and methods},
  journal = {Machine Learning},
  volume  = {110},
  pages   = {457--506},
  year    = {2021},
  doi     = {10.1007/s10994-021-05946-3}
}

@article{kompa2021second,
  author  = {Kompa, Benjamin and Snoek, Jasper and Beam, Andrew L.},
  title   = {Second opinion needed: communicating uncertainty in medical machine learning},
  journal = {npj Digital Medicine},
  volume  = {4},
  number  = {4},
  year    = {2021},
  doi     = {10.1038/s41746-020-00367-3}
}

@article{begoli2019need,
  author  = {Begoli, Edmon and Bhattacharya, Tanmoy and Kusnezov, Dimitri},
  title   = {The need for uncertainty quantification in machine-assisted medical decision making},
  journal = {Nature Machine Intelligence},
  volume  = {1},
  pages   = {20--23},
  year    = {2019},
  doi     = {10.1038/s42256-018-0004-1}
}

@article{nicholson2020constructing,
  author  = {Nicholson, David N. and Greene, Casey S.},
  title   = {Constructing knowledge graphs and their biomedical applications},
  journal = {Computational and Structural Biotechnology Journal},
  volume  = {18},
  pages   = {1414--1428},
  year    = {2020},
  doi     = {10.1016/j.csbj.2020.05.017}
}

@article{chung2020signal,
  author  = {Chung, Grace and Etter, Katherine and Yoo, Andrew},
  title   = {Medical device active surveillance of spontaneous reports: A literature review of signal detection methods},
  journal = {Pharmacoepidemiology and Drug Safety},
  volume  = {29},
  number  = {4},
  pages   = {369--379},
  year    = {2020},
  doi     = {10.1002/pds.4980}
}

@article{mao2021nlp,
  author  = {Mao, Jialin and Sedrakyan, Art and Sun, Tianyi and Guiahi, Maryam and Chudnoff, Scott and Kinard, Madris and Johnson, Stephen B.},
  title   = {Assessing adverse event reports of hysteroscopic sterilization device removal using natural language processing},
  journal = {Pharmacoepidemiology and Drug Safety},
  volume  = {31},
  number  = {4},
  pages   = {442--451},
  year    = {2021},
  doi     = {10.1002/pds.5402}
}

@article{sohoni2026llm,
  author  = {Sohoni, Nishan and Sohoni, Nimit S. and Sutherland, Ryan A. and Sundaresan, Vinaik M. and Smani, Shayan and Ananth, Prasanna and Onofrey, John A. and Aneja, Sanjay and Miszczyk, Marcin and Lee, Ho-Joon and Olivieri, Julia E. and Leapman, Michael S.},
  title   = {Automated classification of adverse events after hydrogel perirectal spacer insertion for prostate cancer using large language models},
  journal = {Urology},
  volume  = {209},
  pages   = {18--24},
  year    = {2026},
  doi     = {10.1016/j.urology.2026.01.008}
}

@article{slivinskis2025recall,
  author  = {Barbosa Slivinskis, Victor and Agi Maluli, Isabela and Seth Broder, Joshua},
  title   = {A machine learning algorithm to predict medical device recall by the {Food and Drug Administration}},
  journal = {Western Journal of Emergency Medicine},
  volume  = {26},
  number  = {1},
  pages   = {161--170},
  year    = {2025},
  doi     = {10.5811/westjem.21238}
}

@article{xu2015changepoint,
  author  = {Xu, Zhiheng and Kass-Hout, Taha and Anderson-Smits, Colin and Gray, Gerry},
  title   = {Signal detection using change point analysis in postmarket surveillance},
  journal = {Pharmacoepidemiology and Drug Safety},
  volume  = {24},
  number  = {6},
  pages   = {663--668},
  year    = {2015},
  doi     = {10.1002/pds.3783}
}

@inproceedings{lakshminarayanan2017ensembles,
  author    = {Lakshminarayanan, Balaji and Pritzel, Alexander and Blundell, Charles},
  title     = {Simple and scalable predictive uncertainty estimation using deep ensembles},
  booktitle = {Advances in Neural Information Processing Systems},
  volume    = {30},
  year      = {2017}
}

@inproceedings{kendall2017uncertainties,
  author    = {Kendall, Alex and Gal, Yarin},
  title     = {What uncertainties do we need in {Bayesian} deep learning for computer vision?},
  booktitle = {Advances in Neural Information Processing Systems},
  volume    = {30},
  year      = {2017}
}

@book{vovk2005algorithmic,
  author    = {Vovk, Vladimir and Gammerman, Alexander and Shafer, Glenn},
  title     = {Algorithmic Learning in a Random World},
  publisher = {Springer},
  address   = {New York, NY},
  year      = {2005},
  doi       = {10.1007/b106715},
  isbn      = {978-0-387-00152-4}
}

@article{angelopoulos2023conformal,
  author  = {Angelopoulos, Anastasios N. and Bates, Stephen},
  title   = {Conformal Prediction: A Gentle Introduction},
  journal = {Foundations and Trends in Machine Learning},
  volume  = {16},
  number  = {4},
  pages   = {494--591},
  year    = {2023},
  doi     = {10.1561/2200000101}
}

@article{bates2021distribution,
  author    = {Bates, Stephen and Angelopoulos, Anastasios and
               Lei, Lihua and Malik, Jitendra and Jordan, Michael I.},
  title     = {Distribution-Free, Risk-Controlling Prediction Sets},
  journal   = {Journal of the ACM},
  volume    = {68},
  number    = {6},
  articleno = {43},
  numpages  = {34},
  year      = {2021},
  doi       = {10.1145/3478535}
}

@article{elyaniv2010foundations,
  author  = {El-Yaniv, Ran and Wiener, Yair},
  title   = {On the Foundations of Noise-Free Selective Classification},
  journal = {Journal of Machine Learning Research},
  volume  = {11},
  number  = {53},
  pages   = {1605--1641},
  year    = {2010},
  url     = {https://jmlr.org/papers/v11/el-yaniv10a.html}
}

@book{shalev2014understanding,
  author    = {Shalev-Shwartz, Shai and Ben-David, Shai},
  ee        = {http://www.cambridge.org/de/academic/subjects/computer-science/pattern-recognition-and-machine-learning/understanding-machine-learning-theory-algorithms},
  isbn      = {978-1-10-705713-5},
  pages     = {I-XVI, 1-397},
  publisher = {Cambridge University Press},
  title     = {Understanding Machine Learning - From Theory to Algorithms.},
  year      = 2014
}

@book{bishop2006pattern,
  author      = {Bishop, C.M.},
  description = {Pattern Recognition and Machine Learning - Christopher M. Bishop - Google Books},
  isbn        = {9780387310732},
  lccn        = {2006922522},
  publisher   = {Springer},
  series      = {Information Science and Statistics},
  title       = {Pattern Recognition and Machine Learning},
  url         = {https://books.google.co.in/books?id=kTNoQgAACAAJ},
  year        = 2006
}

@article{goodfellow2016deep,
  author    = {Heaton, Jeff},
  ee        = {https://www.wikidata.org/entity/Q130466682},
  journal   = {Genet. Program. Evolvable Mach.},
  number    = {1-2},
  pages     = {305-307},
  title     = {Ian Goodfellow, Yoshua Bengio, and Aaron Courville: Deep learning - The MIT Press, 2016, 800 pp, ISBN: 0262035618.},
  url       = {http://dblp.uni-trier.de/db/journals/gpem/gpem19.html#Heaton18},
  volume    = 19,
  year      = 2018
}

@article{Hornik1991,
  title   = {Approximation capabilities of multilayer feedforward networks},
  author  = {Kurt Hornik},
  journal = {Neural Networks},
  volume  = {4},
  number  = {2},
  pages   = {251--257},
  year    = {1991},
  issn    = {0893-6080},
  doi     = {10.1016/0893-6080(91)90009-T},
  url     = {https://www.sciencedirect.com/science/article/pii/089360809190009T}
}

@misc{kingma2014method,
  author      = {Kingma, Diederik P. and Ba, Jimmy},
  description = {Adam: A Method for Stochastic Optimization},
  note        = {cite arxiv:1412.6980Comment: Published as a conference paper at the 3rd International Conference  for Learning Representations, San Diego, 2015},
  title       = {Adam: A Method for Stochastic Optimization},
  url         = {http://arxiv.org/abs/1412.6980},
  year        = 2014
}
